\algnewcommand{\Input}{\State \textbf{Input: }}
\newtheorem{theorem}{Theorem}[section]
\newtheorem{corollary}{Corollary}[theorem]
\newtheorem{proposition}[theorem]{Proposition}
\newtheorem{definition}[theorem]{Definition}
\newtheorem{example}[theorem]{Example}
\DeclareMathOperator*{\argmin}{\arg\min}
\newcommand{\markthis}[3]{
  \overset{
    \textup{\makebox[0pt]{#1}}%
    \def\@currentlabel{#1}%
    \ltx@label{#2}%
  }{
    #3%
  }%
}
\definecolor{optimization_step}{RGB}{141, 160, 203}
\definecolor{deployment_step}{RGB}{229, 196, 148}
\title{Strategically Deceptive Model Deployment in Performative Prediction%
\thanks{Accepted at the ACM Conference on Fairness, Accountability, and Transparency (FAccT) 2026.}
}
\date{\vspace{-5ex}}  
\author[1]{Javier Sanguino Bautiste}
\author[1]{Thomas Kehrenberg}
\author[1,2]{Jose A.\@ Lozano}
\author[1,3]{Novi Quadrianto}
\affil[1]{Basque Center for Applied Mathematics, Bilbao, Spain}
\affil[2]{University of the Basque Country UPV/EHU, Spain}
\affil[3]{University of Sussex, Brighton, UK}
\affil[ ]{\texttt {\{jsanguino,tkehrenberg,jlozano,nquadrianto\}@bcamath.org}}
\begin{document}

\maketitle

\begin{abstract}
Machine Learning systems are increasingly deployed in decision-making settings that shape user behavior and, in turn, the data on which future decisions are based. Performative Prediction (PP) formalizes this feedback loop by modeling how deployed models induce distributional shifts. It studies how to learn robust and well-performing models under such dynamics. However, existing PP frameworks typically assume that the model governing these decisions is the same model observed by users (therefore, to which they respond). In practice, deployer institutions may instead disclose curated models, while internally relying on distinct opaque models.

We introduce Decoupled Performative Prediction (DPP), a framework that explicitly models mismatches between the model governing institutional decisions and the model that shapes user behavior. By analyzing the resulting optimization landscape, we show that DPP admits new different solutions that provably achieve lower risk for the institution than those under classical PP. We further propose an algorithm with provable convergence guarantees under standard assumptions, demonstrating how easy institutions can benefit from strategically deceptive deployment when they control model disclosure and users lack countervailing power. To capture the implications of such behavior, we introduce the deception cost, a quantitative measure of the degree of deception experienced by users. We study settings in which institutions incorporate this cost into the optimization process, motivated by reputational concerns or potential user abandonment, and show that such self-imposed constraints are insufficient to protect users. Overall, our results demonstrate that model disclosure is not merely an ethical consideration but a core technical design decision, underscoring the need for regulations that hold institutions accountable for deceptive deployment practices.
\end{abstract}

\section{Introduction}
Over the past decade, Machine Learning (ML) systems have become central instruments of institutional decision-making, shaping access to credit, housing, employment, education, and public services. 
For example, a bank may publish a model that estimates whether an applicant will be granted a loan based on financial attributes. 
Prospective borrowers, aware of these criteria, can strategically modify their financial behaviour to increase their perceived chances of approval.
When individuals adapt their actions in response to algorithmic criteria, the model does not merely predict outcomes; it actively alters the data on which future decisions are based.

%


Performative Prediction (PP) provides a formal framework for studying these post-deployment effects in learning systems \cite{peformativeprediction2020perdomo}. It captures the dynamic interaction between deployed predictors and data by explicitly modeling how deployment induces systematic, decision-dependent shifts in the data-generating process. More concretely, PP introduces a distribution map, which maps each deployed model with parameters $\theta$
to the data distribution $\mathcal{D}(\theta)$ induced after deployment.
%
%
However, PP implicitly assumes that the same model both governs institutional decisions and is observed by individuals, who then adapt their behavior accordingly. In real-world deployments, this rarely holds.
Consider a setting in which a bank discloses a simplified credit scoring model to applicants while relying internally on a more complex system to determine actual loan terms. 
As a result, individuals adapt their behavior to an observed model that differs from the one that actually governs outcomes.

This decoupling fundamentally alters the closed-loop interaction between model and data: institutional decisions are made using one model, while data is influenced by another. Individuals may strategically change their financial behavior, employment histories, or housing choices in response to disclosed criteria, even though their outcomes are ultimately determined by a different, opaque decision rule. These mismatches can expose individuals to hidden risks that are invisible under standard PP abstractions.

Crucially, these dynamics are structured by pronounced power asymmetries. Institutions unilaterally control model design, objectives, and deployment, while users are numerous, uncoordinated, and subject to algorithmic decisions without effective avenues for contestation. In other words, institutions can exercise what has been termed \textit{performative power} \cite{performativepower2022hard}: the capacity to shape individual behavior and, in turn, the data-generating environment in ways that support their objectives, while users have little ability to exert reciprocal influence. Moreover, when institutions strategically misrepresent deployed models, they can further steer user behavior while retaining control over the true decision criteria.



\subsection{Contributions}

In this paper, we introduce Decoupled Performative Prediction (DPP), a framework that explicitly models mismatches between used and perceived models. DPP formalizes settings in which institutional decisions are governed by a model $\theta_M$, while user behavior is shaped by a distinct disclosed model $\theta_D$, inducing a distribution map $\mathcal{D}(\theta_D)$.
DPP enables the formal analysis of institution–user interactions under strategic disclosure.
%
Specifically, our contributions are:

\begin{itemize}
    \item We formalize mismatches between deployed and perceived models within the DPP framework, enabling the systematic study of opaque and strategically deceptive deployment.
    \item We define the decoupled optimum, an interest point within the framework, show that it outperforms standard PP interest points, and propose an algorithm that provably converges to it under typical assumptions.
    \item We introduce a quantitative measure to know how costly to the users is the deceptive behavior of the institution, the \textit{deception cost}, providing a principled tool for assessing deployment risks in high-stakes settings.  
\end{itemize}

Our results show that performative feedback is not merely a function of deployed models, but of what users are led to believe those models are. 
This highlights disclosure as a central design variable in performative systems and reframes transparency not just as an ethical concern, but as a core technical factor in learning under feedback.

The remainder of the paper is organized as follows: Section~\ref{sec:related-work} reviews related work on performative prediction and performative power, Section~\ref{sec:performative-prediction} introduces the core concepts of PP, Section~\ref{sec:decoupled-performative-prediction} presents the proposed DPP framework, Section~\ref{sec:deceptive-cost} defines the deception cost and Section~\ref{sec:experiments} reports and discusses our experimental results.

\section{Related work}
\label{sec:related-work}

Performativity has its origin in performative utterances in language, which are sentences that change reality simply by being stated. They are expressions that do not merely describe or report, but rather constitute an action in themselves, such that uttering the sentence is part of performing the action it names \cite{austin1975things}. 
Some examples are “I declare you husband and wife,” which performs the act of marrying, or “I am sorry,” which performs the act of apologizing. It has since been applied across various fields such as journalism \cite{broersma2010journalism} and economics \cite{mackenzie2020economists} to explain scenarios where articulations do not merely reflect reality but actively shape it. 
In the context of ML, it was first introduced in \citet{peformativeprediction2020perdomo} as PP to study an obvious but overlooked problem: the interaction between model and world upon deployment, which was previously conceptualized as a feedback loop \cite{liu2018delayed,ensign2018runaway}. PP seeks to find robust and good-performing models when the model not only generates predictions on the environment but also shifts the distribution from which data is drawn, thereby shaping the data-generating process.

In their seminal work, \citet{peformativeprediction2020perdomo} established two solution concepts: \emph{performative stability}, which gives a sense of robustness to the distribution shift, and \emph{performative optimality}, which is the best performing model in this closed-loop interaction between model and data. They presented two algorithms, based on re-training, for reaching performative stability under some strong conditions: Repeated Gradient Descent (RGD) and Repeated Risk Minimization (RRM). Under even stronger assumptions, the stable point is close to the optimal point.
Subsequent work has studied convergence to the stable point in performative settings under increasingly general assumptions, including stateful environments \cite{brown2022performative}, extensions to reinforcement learning with policy-dependent environments and regularization-based convergence to stable policies \cite{mandal2023performative}, and relaxed assumptions that enable the study of convergence in neural networks  \cite{mofakhami2023performative}. Stochastic versions of the algorithms are also exist \cite{mendler2020stochastic, drusvyatskiy2023stochastic,li2024stochastic}.  
However, \citet{kabra2024limitations} showed that retraining can converge to performatively stable points that are far from the performatively optimal solution, or fail to converge altogether under simple and practically relevant performative shifts, particularly in finite-sample regimes, highlighting limitations of existing guarantees for retraining algorithms in real-world deployment. 

To optimize directly for the more interesting optimal point, one needs information about the distribution map. 
The challenge is then about calculating the associated \textit{performative gradient}. \citet{izzo2021learn} addressed this by using the REINFORCE method \cite{williams1992simple}. The method, however, is limited to distribution maps with a known functional form. \citet{miller2021outside} generalised it by assuming a linear scale family of the distribution map.
\citet{cyffers2024optimal} proposed using the reparametrization trick \cite{kingma2013auto} to calculate it by assuming a push-forward model for the distribution map.

Although our work also changes the original assumptions of PP and studies convergence properties, it is an extension of the classical PP framework rather than only a modification. Our framework strictly generalizes the classical setting by allowing the model to which users respond to differ from the model actually deployed by the institution. 
By formalizing decoupled deployment and disclosure, our work introduces a new modeling layer for studying institutional power, strategic misrepresentation, and algorithmic deception within performative systems. 
This reframes disclosure not merely as a transparency or explainability concern, but as a core technical design variable that governs behavioral feedback, welfare outcomes, and institutional advantage.

Our work is not the first to consider settings in which users do not fully or directly react to deployed models. Prior work has examined learning in performative settings with approximated distribution maps \citep{lin2024plugin,xue2024distributionally,wang2026wasserstein}, as well as learning in the neighboring literature on strategic classification \citep{HarMegPapWoo16}, where users respond based on estimates of the deployed model rather than full knowledge of it \citep{ghalme2021strategic}. The DPP framework we introduce is sufficiently general to accommodate these settings. However, in this work, we focus on the comparatively understudied setting in which model disclosure is not merely a source of incomplete information but an endogenous design choice that shapes performative feedback, learning dynamics, and institutional incentives over time. While we restrict attention to this setting in this paper, we emphasize that the framework naturally extends beyond it.

Also, closely related to our work is the study of how power exercised by different parties can influence and modify learning systems.

On the one hand, \textit{collective action} has explored how groups of agents coordinating their behavior can influence learning systems \citep{HarMazMenZrn23,GauBacJor25,KarVinKarSun25}. 
While collective action frameworks focus on user-side strategic coordination against a given model, our work instead focuses on institution-side strategic control of information that shapes individual responses in the first place. 
These perspectives are complementary: studying institutional influence through misaligned disclosure could hint at how users could organise responses to asymmetric power dynamics.

On the other hand, the role that institutions can have in modifying the environment through model deployment has been conceptualized as \emph{performative power} \cite{performativepower2022hard}, which quantifies an institution’s ability to steer users’ behavior—and thereby the induced data distribution—in directions that benefit the institution. 
Rather than measuring an institution’s ability to directly reshape the distribution map itself, we assume that institutions exert performative power by strategically disclosing, approximating, or obscuring deployed decision systems. In this setting, institutions intentionally induce systematic mismatches between the model that shapes user behavior and the model that ultimately determines outcomes, thereby exercising performative power while engaging in misleading behavior without being subject to corresponding accountability.
%
Accordingly, our analysis centers on the consequences of informational asymmetry and strategic model disclosure. We do not require an explicit notion of performative power in our formalization; we instead introduce a model of \emph{institutional deception} that captures a distinct and underexplored dimension of institutional control over performative systems.

\section{Background on Performative Prediction}
\label{sec:performative-prediction}

In this section, we introduce the basic notions of PP, including its key objectives and main algorithms. Apart from introducing enough background to understand the rest of the paper, the aim of this section is to facilitate a direct comparison with Section \ref{sec:decoupled-performative-prediction}, where we transfer these concepts to our framework, \textit{Decoupled} PP. 

Let $\mathcal{Z}$ denote the data space of interest.
In the case of classification, we would have \(\mathcal{Z}=\mathcal{X}\times \mathcal{Y}\) where \(\mathcal{X}\) is the feature space and \(\mathcal{Y}\) is the label space.
The goal is to learn a model 
with model parameters \(\theta\in\Theta\), and the learning objective is to minimize a loss function \(\ell:\mathcal{Z}\times \Theta\to \mathbb{R}\).

The main characteristic of PP problems is, however, that there is no fixed data distribution over which to minimize the expected loss.
Once deployed, a model \(\theta\) induces a shift in the distribution. 
This dependency is modeled by the \emph{distribution map} \(\mathcal{D}:\Theta\to \Delta(\mathcal{Z})\),
where \(\Delta(\mathcal{Z})\) denotes the set of distributions over \(\mathcal{Z}\).
One can evaluate the model's performance under this distribution shift with the \emph{performative risk}
%
%
\begin{equation}
    \mathcal{PR}(\theta) = \mathbb{E}_{z \sim \mathcal{D}(\theta)} \big[ \ell(z; \theta) \big] ~,
    \label{eq:performative-risk}
\end{equation}
which is the risk on the distribution that the model induces, \emph{not} the distribution the model was trained on.

\subsection{Points of interest in PP: Stable and Optimal Points}

Two solutions have been defined for PP: a stable point ($\theta_{ST}$), which is a robust solution to the distribution shift, and an optimal point ($\theta_{OP}$), which is the minimum of the performative risk.

%
\begin{definition}
\label{def:stable_point}
A stable point $\theta_{ST}$ is a value that minimizes the loss for the distribution it induces:
\begin{equation}
\theta_{ST} = \argmin_{\theta \in \Theta} \mathbb{E}_{z \sim \mathcal{D}(\theta_{ST})}  \big[\ell(z, \theta)\big].
\end{equation}
\end{definition}

\begin{definition}
\label{def:optimal point}
An optimal point $\theta_{OP}$ is the value of minimal loss of the performative risk:
\begin{equation}
\theta_{OP} := \argmin_{\theta \in \Theta} \mathcal{PR}(\theta) = \argmin_{\theta \in \Theta} \mathbb{E}_{z \sim \mathcal{D}(\theta)}  \big[\ell(z, \theta)\big].
\end{equation}
\end{definition}

As the optimal point is the minimizer of the performative risk, it follows that $\mathcal{PR}(\theta_{OP}) \le \mathcal{PR}(\theta_{ST})$.

Let us introduce a minimal example to illustrate these concepts. We will build on top of this example throughout the paper to show the differences between PP and DPP. 

\begin{example}[Re-storied based on biased coin flip example of \citet{peformativeprediction2020perdomo}]
\label{ex:biased-coin}
Suppose the environment consists of applicants whose repayment outcome is binary: an applicant either repays their loan (outcome $1$) or does not repay their loan (outcome $0$). 
Because of performativity, the outcome depends not only on the applicant's feature $X\in\{-1,+1 \}$ but also on the institution's deployed model parameter $\theta\in{[0,1]}$. We model the true underlying outcome as $Y \mid X \sim \mathrm{Bernoulli}\!\left(\tfrac{1}{2} + \mu X + \varepsilon \theta X\right)$,
where $\mu \in \left(0, \tfrac{1}{2}\right)$ captures a baseline bias in repayment propensity, and $\varepsilon > 0$ quantifies the strength of the performative effect. We further assume $\varepsilon < \tfrac{1}{2} - \mu$ to ensure the success probability remains in $[0,1]$.
The term $\varepsilon\theta X$ captures how the deployed model influences applicant behaviour, thereby shifting the data-generating process.
The induced data distribution is given implicitly by the Bernoulli model above: $(X,Y) \sim \mathcal{D}(\theta)$, since $\mathbb{P}(X,Y) = \mathbb{P}(Y|X)\cdot\mathbb{P}(X)$.
The institution trains a linear regression model $f_{\theta}(x)=\frac{1}{2} + \theta x$ to predict the binary outcome $Y$, using a squared loss $\ell(x,y;\theta) = (y-f_\theta(x))^2$. 

Then, the stable point and the optimal point are 

$$\theta_{ST}=\frac{\mu}{1-\varepsilon}~,\quad \theta_{OP}=\frac{\mu}{1-2\varepsilon}~;$$ 

and the achieved performative risk values are 

$$\mathcal{PR}(\theta_\mathit{ST})=\frac{1}{4} - \frac{\mu^2}{(1-\varepsilon)^2} ~, \quad \mathcal{PR}(\theta_\mathit{OP})= \frac{1}{4} - \frac{\mu^2}{(1-2\varepsilon)}~.$$

Note that for the solutions to exist in the given domain of $\theta\in[0,1]$, we further require $0 \le \frac{\mu}{1-2\varepsilon} \le 1$. The details of these derivations can be consulted in Appendix~\ref{appendix:details-example}.
\end{example}

\subsection{Convergence in PP}

The optimization algorithms\footnote{As in the broader ML field, the algorithms are based on Gradient Descent (GD)} that converge to the stable point are based on the naive approach of retraining the model after the distribution shift. These algorithms, RRM (if fully retrained) or RGD (if retrained with only one gradient step), were proposed in the seminal work by \citet{peformativeprediction2020perdomo}. 
One only needs to wait for the post-deployment distribution shift, collect the shifted data, and retrain the model. Consequently, these algorithms train solely on the data distribution induced by the current parameters. They do not account for the underlying mechanism driving the shift (i.e., the distribution map), since they observe shifted samples but not how the distribution would evolve after redeployment. As a result, they can only converge to the stable point.
 \cite{peformativeprediction2020perdomo}.

%
%
%
The algorithm that converges to the optimal point, PerfGD \cite{izzo2021learn,cyffers2024optimal}, takes into account information of the distribution map. It is based on applying gradient descent directly to the performative risk, which leads to the following parameter update rule

$$\theta^{t+1} \leftarrow \theta^{t} - \mu ~ \nabla_{\theta}\mathcal{PR}(\theta).$$

The key step is to calculate the \textit{performative gradient} $\nabla_{\theta}\mathcal{PR}(\theta)$, which can be calculated with (a) the REINFORCE trick, if one has access to the probability density distribution of the distribution map, or (b) the reparametrization trick, if one assumes the distribution map can be written with a push-forward model (Definition~\ref{def:push-forward}). The mathematical details of both variants can be consulted in Appendix~\ref{appendix:perfgd-variants}.
The convergence of this algorithm to the optimal point has been studied by analyzing the convexity of the performative risk $\mathcal{PR}(\theta)$ \cite{miller2021outside}.

\section{Decoupled Performative Prediction (DPP)}
\label{sec:decoupled-performative-prediction}

There are many concrete settings in which institutions or companies attempt to influence or mislead individuals for their own benefit outside of ML. Examples include deceptive advertising claims, the use of dark patterns in user interfaces to steer users toward more expensive options, or the selective disclosure of information in pricing, contracts, and promotions. Such practices are widely recognized as harmful and are therefore subject to consumer-protection and advertising regulations in many jurisdictions.
Our main claim in this paper is that analogous forms of manipulation can also occur through the deployment of predictive models. This risk is exacerbated by the pronounced asymmetry of power between model deployers and model users. We therefore argue that model deployment should be subjected to regulatory scrutiny comparable to that applied to other regulated commercial practices.

To capture scenarios where an institution intentionally presents a misleading model to users, we introduce an extension of the PP framework, \textit{Decoupled} Performative Prediction (DPP), where the model influencing the data ($\theta_D$) is decoupled from the one being optimized ($\theta_M$). Therefore, we consider the decoupled performative risk.

\begin{equation}
    \mathcal{DPR}(\theta_D, \theta_M ) = \mathbb{E}_{z\sim \mathcal{D}(\theta_D)} [\ell(z;\theta_M)].
\end{equation}

Note that if $\theta_M = \theta_D = \theta$, then $\mathcal{DPR}(\theta,\theta) = \mathcal{PR}(\theta)$; i.e., users react to the same model that the institution deploys, and the setting reduces to the standard PP framework. Figure \ref{fig:DR-3d_plot} shows the optimization landscape that the $\mathcal{DPR(\theta_D, \theta_M )}$ spans for Example \ref{ex:pricing_luxury}.

This \textit{decoupled} scenario is arguably \textit{very} plausible, as there are clear incentives for an organization to pursue it. Optimizing over the expanded parameter space \((\theta_D, \theta_M)\) offers the potential for reduced risk, as we prove in Section~\ref{ssec:the_decoupled_optimum} and show experimentally in Section~\ref{sec:experiments}. Importantly, this behavior comes at little to no cost for the institution, as they control the model's training, deployment, and presentation, while users are compelled to respond to what they perceive as the true model.

\begin{figure}[t]
\begin{center}
\centerline{\includegraphics[width=0.6\columnwidth]{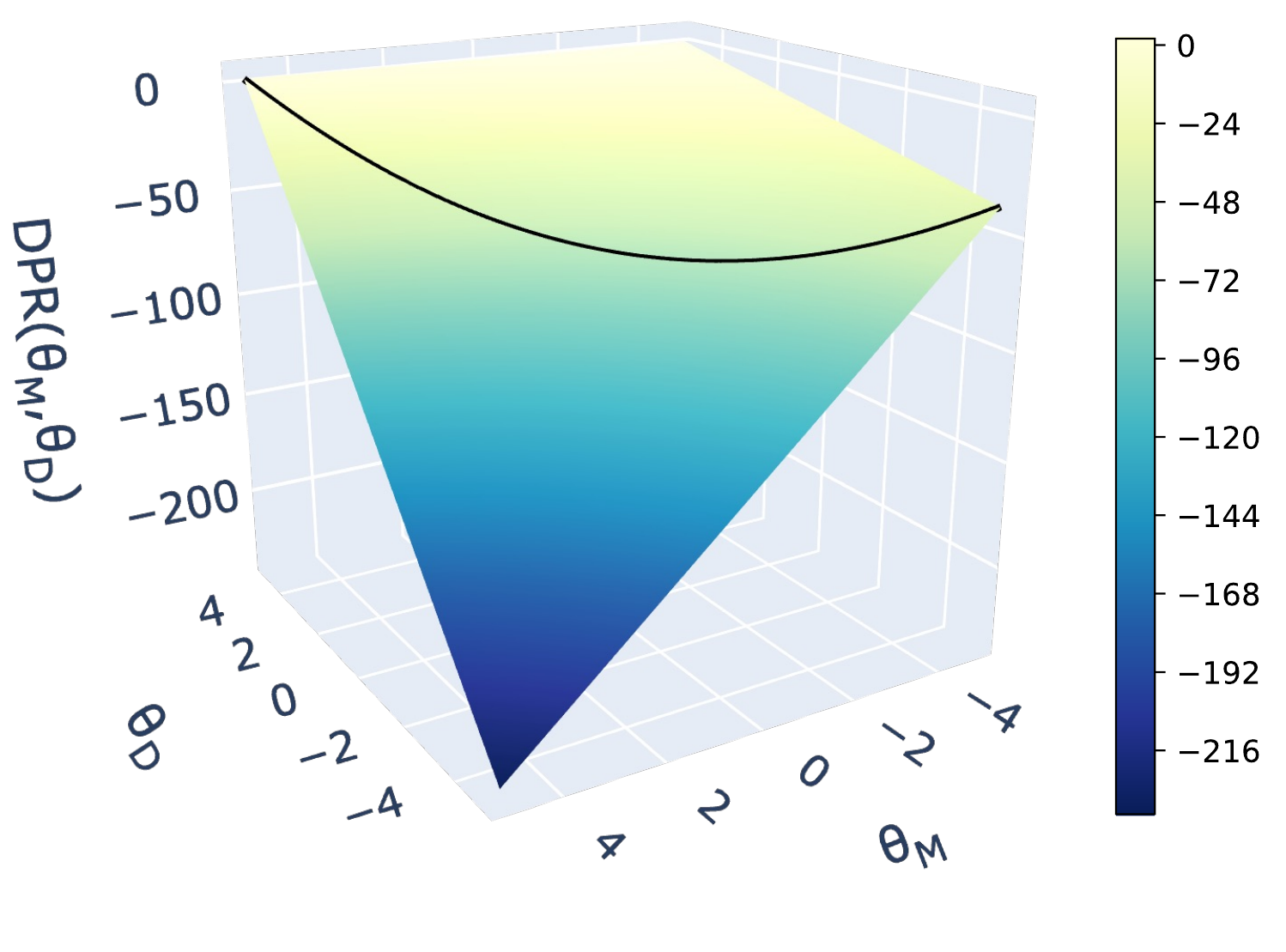}}
\caption{%
Visualization of the decoupled performative risk landscape of example \ref{ex:pricing_luxury} for $d=1$. This is the optimization space of DPP. The value of $\mathcal{DPR}(\theta_D,\theta_M)$ is represented through color intensity, with brighter colors corresponding to larger values and darker colors to smaller values. The black line represents the section of the surface where $\theta_D=\theta_M$, i.e., the standard performative risk.}%
\label{fig:DR-3d_plot}
\end{center}
\end{figure}

\begin{example}[Pricing with ads]
\label{ex:pricing_luxury}
Consider an airline that has one model for meta-search platforms to attract consumer attention, while the final price offered on their own booking platform is set by another one. We draw inspiration from the pricing example of \citet{izzo2021learn} to model this setting. In the resulting formulation, the model $\theta \in \mathbb{R}^d$ represents the vector of price deviations from a baseline price $\theta_0$ across $d$ products and the input $z \in \mathbb{R}^d$ corresponds to the demand for each product, e.g., the number of people that want to buy the product. The demand vector associated with the baseline price $\theta_0$ is denoted by $z_0$. A company wants to maximize their total revenue ($(\theta_0 + \theta)^Tz$), thus $\ell(z;\theta) = -(\theta_0 + \theta)^Tz$. The distribution map is defined by $\mathcal{D}_\mathrm{pr}(\theta)=\mathcal{N}(z_0-\varepsilon\theta; \Sigma)$ with $\varepsilon>0$, i.e. the demand linearly decreases as the price increases. Now, for the DPP scenario, we consider two price deviations for each product: $\theta_M$, which is used by the company to set the real prices and $\theta_D$, which is used for meta-search platforms, and therefore is the one that clients react to. In this example,
\begin{equation}
    \mathcal{DPR}(\theta_D,\theta_M) = \mathbb{E}_{z\sim\mathcal{N}(z_0 - \varepsilon\theta_D; \Sigma)}[-(\theta_0 + \theta_M)^Tz]. 
\end{equation}
\end{example}

\begin{example}[Money Lending Strategic Classification]
\label{ex:tricking-stragetic}
It extends the classic strategic classification setup \cite{peformativeprediction2020perdomo,HarMegPapWoo16}. Consider one bank that provides a website interface where users can interact with a credit model $\theta_D$, receiving immediate feedback after entering their original financial details $z_o\sim\mathcal{D}_o$. As the users could strategically game the system and change their input features with $x = x_o + \varepsilon\nabla_x f(x_0; \theta)$, the bank is unlikely to use this public model for real decisions. Instead, it is probable that they employ an internal model $\theta_M$, which is kept secret. In this case, 
\begin{equation}
    \mathcal{DPR}(\theta_D,\theta_M) = \mathbb{E}_{z_o\sim\mathcal{D}_o}[\ell(x_0 + \varepsilon\nabla_x f(x_0; \theta_D);\theta_M)].
\end{equation}

\end{example}

\subsection{Interest point in \textit{Decoupled} PP: \textit{decoupled} optimal point}
\label{ssec:the_decoupled_optimum} 

The minimum of the new optimization space has a risk that is potentially smaller than the performative risk achieved by the optimal point. Although \textit{decoupled stable} points exist in DPP, they are less compelling than the \textit{decoupled optimal} point, as there is one stable point for each possible output of the distribution map, i.e., for each $\theta_D$. For this reason, the focus of this section is only on the decoupled optimal point.
A detailed description of decoupled stable points is available in Appendix~\ref{appendix:decoupled-stable}.

%
\begin{definition}
The \textit{decoupled} optimum is defined as:
\begin{equation}
    (\theta_M^*, \theta_D^*) = \argmin_{(\theta_M, \theta_D)\in\Theta \times \Theta} \mathcal{DPR}(\theta_M, \theta_D)
\end{equation}
\end{definition}

\begin{example}
\label{ex:biased-coin-dpr}
We now consider the example \ref{ex:biased-coin} under DPP.
The institution's deployed decision model is $f_{\theta_M} = \frac{1}{2} + \theta_M x$, and the institution evaluates performance using the squared loss $\ell(x,y;\theta_M) = (y - f_{\theta_M}(x))^2$.
However, behavioral adaptation is governed by a potentially different disclosed model parameter $\theta_D$.
Accordingly, the underlying outcome is generated according to $Y|X\sim\text{Bernoulli}(\frac{1}{2}+\mu X + \varepsilon \theta_DX)$, so that the induced data distribution satisfies $(X,Y)\sim \mathcal{D}(\theta_D)$. 
Remember that in this decoupled setting, institutional decisions are made using $\theta_M$%
, while the data-generating process is shaped by $\theta_D$, explicitly separating the model that governs outcomes from the model that governs behavioral adaptation.
The decoupled optimal point is 

$$(\theta^*_D,\theta^*_M) = (1, \mu+\varepsilon),$$
which results in a risk of 
\begin{equation*}
    \mathcal{DPR}(\theta^*_D,\theta^*_M) = \frac{1}{4} - (\mu+\varepsilon)^2.
\end{equation*} 

The details of this derivation can be consulted in Appendix~\ref{appendix:details-example}. The remaining question is how this quantity compares to $\mathcal{PR}(\theta_{OP})$. We compute
\begin{align*}
    \mathcal{PR}(\theta_{OP}) - \mathcal{DPR}(\theta^*_D,\theta^*_M)
    = (\mu + \varepsilon)^2 - \frac{\mu^2}{1-2\varepsilon} 
    = \frac{\varepsilon\bigl(2\mu(1-\mu-2\varepsilon) + \varepsilon(1-2\varepsilon)\bigr)}{1-2\varepsilon}.
\end{align*}
Remember that under the standing assumptions, $\varepsilon>0$ and $\mu>0$. Moreover, since $\varepsilon < \frac{1}{2}-\mu$, it follows that $1-2\mu-2\varepsilon>0$. Therefore $1-2\mu-2\varepsilon>0$ and $1-2\varepsilon>0$. Consequently, the numerator and denominator of the above expression are non-negative, implying that, in this example,
\[
\mathcal{PR}(\theta_{OP}) \ge \mathcal{DPR}(\theta^*_D,\theta^*_M).
\]
\end{example}

However, this inequality is not specific to this example.
The decoupled performative formulation optimizes over a strictly larger space 
than the classical performative objective. As a consequence, the decoupled 
optimum can only improve (or match) the achieved risk. The following theorem quantifies the gap between the classical performative optimum and the decoupled optimum by providing lower and upper bounds.%

\begin{theorem}[Gap between performative and decoupled optima]
\label{th:gap-pr-dpr}
Assume that \(\ell(z;\theta)\) is (i) \(\mu\)-strongly convex in \(\theta\) for all \(z\), and (ii) \(L\)-Lipschitz in \(\theta\) for all \(z\).
Then, the decoupled optimum achieves a risk lower than (or at least equal to) the performative optimum, and the gap satisfies
\begin{equation}
    L \|\theta_D^* - \theta_M^*\|_{\ell_2}
    \;\ge\;
    \mathcal{PR}(\theta_{OP})
    -
    \mathcal{DPR}(\theta_D^*,\theta_M^*)
    \;\ge\;
    \frac{\mu}{2}
    \|\theta_{OP} - \theta_M^{\mathrm{BR}}(\theta_{OP})\|_{\ell_2}^2,
\end{equation}
where \(\theta_M^{\mathrm{BR}}(\theta_D)
\in
\arg\min_{\theta_M}
\mathcal{DPR}(\theta_D,\theta_M)\)
denotes the best-response model under the distribution induced by \(\theta_D\).
\end{theorem}

The proof can be found in Appendix~\ref{annex:proof-bounds}.

The lower bound depends on how far the performative optimum is from being a fixed point of the best-response operator $\|\theta_{OP} - \theta_M^{\mathrm{BR}}(\theta_{OP})\|^2_{\ell_2}$. If
\(
\theta_{OP} = \theta_M^{\mathrm{BR}}(\theta_{OP}),
\)
then the performative optimum is a fixed point of the best-response operator, i.e., it is also stable. In this case, the lower bound vanishes, and the theorem does not guarantee any strict improvement from decoupling. Nevertheless, optimality and stability do not generally coincide. Stable points need not be optimal, nor even close to optimal, unless additional structural assumptions are imposed \cite{peformativeprediction2020perdomo}. In practice, such assumptions are often not satisfied \cite{kabra2024limitations}.
Our theorem shows that whenever the performative optimum is not stable \(
\theta_{OP} \neq \theta_M^{\mathrm{BR}}(\theta_{OP})
\), decoupling achieves strictly lower risk \(
\mathcal{PR}(\theta_{OP}) > \mathcal{DPR}(\theta_D^*, \theta_M^*)
\).  

The upper bound is controlled by the disagreement
\(
\|\theta_D^* - \theta_M^*\|_{\ell_2}
\)
at the decoupled optimum. In particular, if
\(
\theta_D^* = \theta_M^*,
\)
then the upper bound is zero, and therefore the gap itself must be zero. This corresponds to the standard PP setting, where one assumes \(\theta_D = \theta_M\).

Taken together, the theorem shows that non-stability of the performative optimum is sufficient to guarantee a strict benefit from decoupling, while coincidence of the decoupled pair is sufficient to make the gap vanish. We believe that achieving a lower risk without any associated cost is a clear incentive for the institution to pursue the decoupled point.

\subsection{Convergence in \textit{Decoupled} PP}

\textit{Standard} PP has explored whether the performative optimum can be reached by discussing the convexity of $\mathcal{PR}(\theta)$ \cite{miller2021outside,cyffers2024optimal}. We do the same for the decoupled optimum and $\mathcal{DPR}(\theta_M,\theta_D)$. 

We base our analysis on the push-forward model $\varphi(\cdot;\theta)$ \cite{cyffers2024optimal}, as it is a very powerful mechanism for PP. It captures the effect of the shift on individual samples, which is the essence of performative prediction, e.g, when applying for a loan, what changes the original distribution are the actions performed by individuals. 

\begin{definition}[Push-forward model]
\label{def:push-forward}
    A distribution map, $\mathcal{D}(\cdot)$, has a push-forward model, $\varphi(\cdot;\theta):\mathbb{R}\rightarrow\mathbb{R}$ if:
    \begin{equation}
        z \sim \mathcal{D}(\theta)\quad\iff\quad z = \varphi(z_o;\theta), \quad z_0 \sim \mathcal{D}_0 
    \tag{A1}
    \label{eq:push-forward}
    \end{equation}
    where $\mathcal{D}_0$ is a base distribution.
\end{definition}
Both location-scale families and distribution maps arising from strategic classification can be naturally represented with the push-forward model.
Additionally, many distribution maps with a functional form can be reparameterized such that they have a base distribution that is independent of $\theta$ and a transformation function that depends on $\theta$.
This is, for example, the case for any functional form that is defined in terms of Gaussian distributions and uniform distributions.

Moreover, we assume that this push-forward model is an affine function with respect to $\theta_D$.

\begin{definition}[Affinity]
    A function $f:\mathbb{R}^n\to\mathbb{R}^m$ is \emph{affine} if
    \begin{equation}
        f(\lambda_1x_1 + \ldots + \lambda_nx_n) = \lambda_1 f(x_1) + \ldots + \lambda_n f(x_n)
    \tag{A2}
    \label{eq:affine-function}
    \end{equation}
    where $\lambda_1 + \ldots + \lambda_n=1$.
\end{definition}

Location-scale families provide a familiar example of affine push-forward maps, but they are not the only possibility. 
More generally, one can consider distribution maps in which a base distribution (e.g., Gaussian or uniform) is transformed by an affine function of the parameter $\theta$.
In contrast, strategic classification models yield an affine push-forward map only under linear decision rules.
When the classifier is nonlinear, the model parameters typically enter users' best-response transformations in a nonlinear way, which in turn induces a distribution map that is no longer affine in $\theta$.

One standard assumption for the field is convexity of the loss function $\ell(z, \theta)$ over $\theta$ \cite{peformativeprediction2020perdomo} and over $z$ \cite{miller2021outside}. In our case, we consider joint convexity.

\begin{definition}[Jointly convex]
    A function $f: \mathbb{R}^d\times\mathbb{R}^d\to\mathbb{R}$ is \emph{jointly convex} if
    \begin{equation}
        f(\lambda x + (1-\lambda)x';\lambda y + (1-\lambda)y') \leq \lambda f(x;y) + (1-\lambda) f(x';y')~.
    \tag{A3}
    \label{eq:joint-convexity}
    \end{equation}
\end{definition}

\begin{theorem}
\label{th:convexity-dpr}
If $\ell(z;\theta_M)$ is jointly convex (\ref{eq:joint-convexity}) and $\mathcal{D}(\theta_D)$ follows the push-forward model (\ref{eq:push-forward}) where $\varphi(z_o;\theta_D)$ is affine in $\theta_D$ (\ref{eq:affine-function}), then $\mathcal{DPR}(\theta_M,\theta_D)$ is jointly convex.
\end{theorem}

Note that joint convexity implies that the decoupled risk has a minimum that is reachable by gradient descent.

\begin{corollary}
    If $\mathcal{DPR}(\theta_M,\theta_D)$ is jointly convex, then all local minima are global minima and gradient descent converges to the global minimum.
\end{corollary}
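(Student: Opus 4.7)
The plan is to recognize the corollary as a direct invocation of two textbook facts from convex optimization applied to the single function $F(\theta_M,\theta_D) := DR(\theta_M,\theta_D)$ on the product domain $\Theta\times\Theta$. Joint convexity as defined earlier in the paper coincides exactly with ordinary convexity of $F$ viewed as a function on $\Theta\times\Theta$, so no auxiliary construction is needed before citing standard results.

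For the first claim (every local minimum is a global minimum) I would give the classical contradiction argument. Suppose $(\theta_M^\circ,\theta_D^\circ)$ is a local minimum of $F$ but there exists $(\theta_M',\theta_D')$ with $F(\theta_M',\theta_D')<F(\theta_M^\circ,\theta_D^\circ)$. By joint convexity, for any $\lambda\in(0,1)$,
\begin{equation}
F\bigl(\lambda\theta_M'+(1-\lambda)\theta_M^\circ,\,\lambda\theta_D'+(1-\lambda)\theta_D^\circ\bigr)\le \lambda F(\theta_M',\theta_D')+(1-\lambda)F(\theta_M^\circ,\theta_D^\circ)<F(\theta_M^\circ,\theta_D^\circ).
\end{equation}
Taking $\lambda$ arbitrarily small places the convex combination in every neighbourhood of $(\theta_M^\circ,\theta_D^\circ)$, contradicting local minimality. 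This half is entirely self-contained and just needs to be written out.

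For the second claim (gradient descent converges to the global minimum) I would appeal to the standard descent analysis: assuming $F$ is differentiable with $L$-Lipschitz gradient and the step size is chosen as $\eta\le 1/L$, the classical descent lemma combined with convexity yields $F(\theta^{(t)})-\min F = O(1/t)$, so any limit point of the gradient descent iterates is a global minimizer of $F$; under strong convexity the iterates themselves converge linearly. I would cite Nesterov's \emph{Lectures on Convex Optimization} (or Boyd and Vandenberghe) rather than reproduce the derivation.

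The main obstacle is expository rather than technical: joint convexity alone does not guarantee convergence of gradient descent: one also needs smoothness of $F$, an admissible step size, and existence of a minimizer on the domain. The cleanest way to handle this is to make those assumptions explicit in the statement (e.g.\ to assume that $\ell(z;\cdot)$ and $\varphi(z_0;\cdot)$ are sufficiently smooth so that $DR$ inherits Lipschitz gradients, and that $\Theta$ is closed and $DR$ is coercive, or that $\Theta$ is compact), after which the corollary is immediate from the cited textbook results. I would note this caveat briefly in the proof so that the corollary is read as a consequence of the preceding theorem together with standard regularity hypotheses that hold in all the concrete examples considered in the paper.
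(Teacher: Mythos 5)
Your proposal is correct, and it matches the paper in spirit: the paper states this corollary without any proof at all, treating it as a standard fact of convex optimization, which is exactly the textbook route you take. Your contradiction argument for ``local minima are global'' is the standard one and is complete as written. Your caveat on the second claim is well taken and is, in fact, a genuine imprecision in the corollary as stated: joint convexity of $DR(\theta_M,\theta_D)$ alone does not guarantee that gradient descent converges to a global minimizer --- one additionally needs differentiability with Lipschitz gradients (or at least suitable smoothness), an admissible step size, and that the infimum is attained on $\Theta\times\Theta$ (e.g.\ via compactness of $\Theta$ or coercivity of $DR$); these hypotheses do hold in the paper's concrete examples but are nowhere made explicit, so spelling them out as you propose strengthens rather than weakens the result.
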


Therefore, if we apply GD to the decoupled risk, we converge to the decoupled optimum if the assumptions are met. We call this algorithm Decoupled Performative Gradient Descent (DPerfGD), and it is shown in Algorithm \ref{alg:DPerfGD}.  

\begin{algorithm}[t]
\caption{Decoupled Performative Gradient Descent (DPerfGD)}\label{alg:DPerfGD}
\begin{algorithmic}
\Input $\theta_0$ (initial model parameters)
\State $t=0$ 
\State $\theta^{(0)} = \theta_0$ 
\While{\text{not converged}} 
\State $\text{Draw}\ z_i^{(t)} \sim \mathcal{D}_0, i = 1, \dots, n$ 
\State Compute $\nabla_{\theta_M} \, \mathcal{DPR}$ according to Eq.~\eqref{eq:model-derivative}
\State Compute $\nabla_{\theta_D} \, \mathcal{DPR}$ according to Eq.~\eqref{eq:data-derivative}
\State $(\theta^{(t+1)}_D, \theta^{(t+1)}_M) \gets (\theta^{(t+1)}_D, \theta^{(t+1)}_M) - \eta (\nabla_{\theta_D} \mathcal{DPR}, \nabla_{\theta_D} \mathcal{DPR})$
\State $t \gets t +1$
\EndWhile
\end{algorithmic}
\end{algorithm}

\newpage

To apply gradient descent, one needs to compute the corresponding gradients. To calculate the gradient with respect to $\theta_D$, one can use the reparametrization trick assuming that the distribution map can be written using a push-forward model\footnote{We can also use the REINFORCE trick if the probability density function of the image of the distribution map is known, but we consider this more unusual in PP.}:
\begin{align}
    \nabla_{\theta_M}\,\mathcal{DPR}(\theta_M, \theta_D) &= \mathbb{E}_{z \sim \mathcal{D}(\theta_D)} \left[\frac{\partial}{\partial \theta_M}  \ell(z; \theta_M) \right], 
    \label{eq:model-derivative}\\
    \nabla_{\theta_D}\,\mathcal{DPR}(\theta_M, \theta_D) &= \nabla_{\theta_D}\, \mathbb{E}_{z_0 \sim \mathcal{D}_0} \left[ \ell(\varphi(z_o;\theta_D); \theta_M) \right] \nonumber \\  
    &= \mathbb{E}_{z_0 \sim \mathcal{D}_0} \left[ \left.\frac{\partial \ell(z; \theta_M)}{\partial z}\right|_{z=\varphi(z_o;\theta_D)} \cdot\frac{\partial \varphi(z_o;\theta_D)}{\partial \theta_D} \right]. 
    \label{eq:data-derivative}
\end{align}

The implementation of this algorithm can be found in the performativeGYM library \cite{bautiste2026performativepredictionmade}.

\section{Deception cost: A quantitative measure of the ease with which institutions can sustain adversarial deployment
practices}
\label{sec:deceptive-cost}

So far, we have shown how an institution that controls model deployment can engage in deceptive behavior for its own benefit at essentially no cost.
However, in reality, institutions cannot deceive users without limit.
If the manipulation is too blatant, users will catch on.
Thus, there is an implicit cost (e.g., the risk of losing users) for the institution.
We call this the \emph{deception cost} and model it as being proportional to $\|\theta_D - \theta_M\|_{\ell_2}$, i.e., proportional to how far apart the true model and the user-facing model are.

As discussed earlier, asymmetries of power also arise in other domains, such as false advertising and consumer protection. In these contexts, institutions often justify deceptive practices by appealing to a formal notion of user choice, arguing that consumers retain the freedom to opt out and refrain from using the product.
This argument is frequently invoked to resist stronger regulatory intervention. It relies on the assumption that markets discipline deceptive behavior through user exit. However, this assumption often fails in practice, which is precisely why many sectors are subject to strong regulation aimed at protecting users. In many cases, users face substantial switching costs, limited competition, informational asymmetries, or institutional dependencies that make exit infeasible or prohibitively costly. Similar dynamics arise in learning settings: individuals subject to algorithmic decision-making in credit approval, hiring, insurance pricing, content moderation, or access to public services cannot meaningfully opt out without incurring significant economic or social harm. In such contexts, the formal availability of choice does not translate into effective consent or meaningful market discipline.

In our setting, this institutional reasoning manifests through the assumption that users can simply choose not to rely on a disclosed model. Institutions may therefore claim that publishing a model \(\theta_D\) that appears aligned with the deployed model \(\theta_M\) is sufficient. We argue that such behavior constitutes a form of regulatory arbitrage: compliance is signaled through disclosure, while incentives are optimized through deployment.

Throughout this section, we formalize this behavior by considering institutions that internalize the risk of detection, reputational harm, or regulatory scrutiny associated with deceptive deployment practices, and therefore optimize under a deception cost that penalizes misalignment between disclosure and deployment. We show that such institutional incentives alone are insufficient to adequately protect users.
%
%
Let us consider a setting where the institution seeks to keep the disclosed model $\theta_D$ close to the true model $\theta_M$, either due to potential consequences of deception or to external regulatory constraints. Then, the institution would have to solve the following constrained optimization problem:
\begin{align}
(\theta_D^\star, \theta_M^\star)
\in
\argmin_{(\theta_D, \theta_M)\in\Theta\times\Theta}
\;&
\mathcal{DPR}(\theta_D,\theta_M)
\nonumber\\
\text{s.t.}\qquad
&
\|\theta_D-\theta_M\|_{\ell_2}^2
\le c,
\quad
c \ge 0.
\label{eq:optimization-problem-deception}
\end{align}
This constrained optimization problem can be transformed into an unconstrained problem 
by introducing a regularization term $\lambda$ as follows: 
%
\begin{equation}
\label{eq:optimize-soft-constraint}
    \argmin_{(\theta_D, \theta_M) \in \Theta \times \Theta} \mathcal{DPR}(\theta_D, \theta_M) + \lambda\|\theta_D-\theta_M\|^2_{\ell_2},  \quad\text{with } \lambda >0.
\end{equation}
The regularization term $\lambda$ may, for example, reflect the magnitude of the institution’s anticipated risk of user abandonment due to deceptive behavior or the constraint imposed by an external regulator.

Regardless of whether we use a hard constraint or a regularized term, the mechanism of the deception cost biases the optimization towards exploring regions that are close to the ``diagonal'' of the search space where $\theta_M=\theta_D$.
In the extreme case, where $c=0$ or $\lambda\to\infty$, this collapses the setting back into the ordinary performative prediction setup.
For the non-extreme case with finite $c$ and $\lambda$, the effect is a restriction on the search space.
From Theorem~\ref{th:gap-pr-dpr}, we have the upper bound on the risk gap between the performative optimum and the decoupled optimum $    L \|\theta_D^* - \theta_M^*\|_{\ell_2}
    \;\ge\;
    \mathcal{PR}(\theta_{OP})
    -
    \mathcal{DPR}(\theta_D^*,\theta_M^*)~.
$
Thus, a hard constraint such as $\|\theta_D^*-\theta_M^*\|_{\ell_2} \le \sqrt{c}$ from equation \eqref{eq:optimization-problem-deception}, directly affects the achievable gap.

Figure~\ref{fig:algorithm-trajectory} illustrates the optimization landscape and the trajectory of DPerfGD when the deception cost is incorporated as a regularization term in Example~\ref{ex:pricing_luxury}.

\begin{figure}[t]
\begin{center}
\centerline{\includegraphics[width=0.8\paperwidth]{figures/algorithm_trajectory/viz_pricing_v2.pdf}}
\caption{%
Visualization of the trajectory of DPerfGD in the decoupled risk landscape for the Example \ref{ex:pricing_luxury}. \textbf{A} without taking into account any deception cost, i.e., top view of 3D surface in Figure~\ref{fig:DR-3d_plot}; \textbf{B}, \textbf{C}, \textbf{D}, \textbf{E} with the deception cost as a regularizer (soft constraint, eq. \ref{eq:optimize-soft-constraint}) with strength $\lambda={2,4,8,128}$ respectively. The black line represents the section of the surface where $\theta_D=\theta_M$, i.e., the optimization space of \textit{standard} PP. DPerfGD converges to the decoupled optimum. Increasing the regularization strength modifies the loss landscape and pulls the decoupled optimum towards the performative risk. Nevertheless, at some point, the regularization strength makes optimization unstable, so for \textbf{D} and \textbf{E}, we have reduced the learning rate.}%
\label{fig:algorithm-trajectory}
\end{center}
\end{figure}

\subsection{Perceived deception cost}
Defining the deception cost in the parameter space, $\Theta$, is mathematically convenient, but it likely does not capture the experience of the users.
It is hard for users to compute distances on $\theta$.
Instead, they will notice differences in model output over the deployment data distribution.
To capture this, we introduce an alternative notion of \emph{perceived} deception cost, measured by $\mathbb{E}_{z\sim D(\theta_D)}[|f_{\theta_D}(z)-f_{\theta_M}(z)|^p]$, where $f_\theta(x)$ is the output of model $f$ with parameters $\theta$ to input $x$ and $p\in\{1, 2\}$.
We can again formulate the optimization problem with a hard constraint or with a regularization term:

\begin{align}
\label{eq:optimization-problem-perceived-deception}
(\theta_D^\star, \theta_M^\star)
\in
\argmin_{(\theta_D, \theta_M)\in\Theta\times\Theta}
\;&
\mathcal{DPR}(\theta_D,\theta_M)
\nonumber\\
\text{s.t.}\qquad
&
\mathbb{E}_{z\sim D(\theta_D)}
\!\left[
|f_{\theta_D}(z)-f_{\theta_M}(z)|^p
\right]
\le c',
\quad
c'\ge 0.
\end{align}%
or
\begin{equation}
    \argmin_{(\theta_D, \theta_M) \in \Theta \times \Theta} \mathcal{DPR}(\theta_D, \theta_M) + \lambda^{\prime}\mathbb{E}_{z\sim D(\theta_D)}[|f_{\theta_D}(z)-f_{\theta_M}(z)|^p],\quad \text{with }\lambda'>0.
\end{equation}
As with the deception cost over model parameters, the effect here is likewise to restrict the optimization space.

The formulation with the regularization term is easier to optimize, but the formulation with the hard constraint allows us to give a new bound on the gap between the performative and decoupled optima (cf.\ Theorem~\ref{th:gap-pr-dpr}).
For this bound, we assume a supervised learning setup where the loss function is defined on the labels $y$ and the model outputs $\hat y$: $\mathcal{L}(\hat y, y)$. We furthermore assume that $\mathcal{L}(\hat y, y)$ is Lipschitz in $\hat y$, which is, e.g., the case for the cross-entropy loss with logits (see Appendix~\ref{appendix:ce-lipschitz}) and the L1 loss.

\begin{theorem}[Gap between performative and decoupled optima with deception cost]
\label{th:gap-pr-dpr-dc}
Assume that \(\mathcal{L}(\hat y, y)\) is \(L\)-Lipschitz in \(\hat y\) for all \(y\).
Then, under the setting of perceived deception cost with $\mathbb{E}_{(x,y)\sim D(\theta_D)}[|f_{\theta_D}(x)-f_{\theta_M}(x)|] \le c'$, the risk gap between the performative optimum and the decoupled optimum is bounded by
\begin{equation}
    c'L\;\ge\;\mathcal{PR}(\theta_{OP})
    -
    \mathcal{DPR}(\theta_D^*,\theta_M^*).
\end{equation}
\end{theorem}
The threshold $c'$, representing the maximum allowed \emph{perceived} deception cost, therefore directly determines how much utility (i.e., risk reduction) a deployer can gain through user deception. The proof can be found in Appendix~\ref{appendix:deception-cost-risk}.




\begin{figure}[t]
\begin{center}
\centerline{\includegraphics[width=\columnwidth]{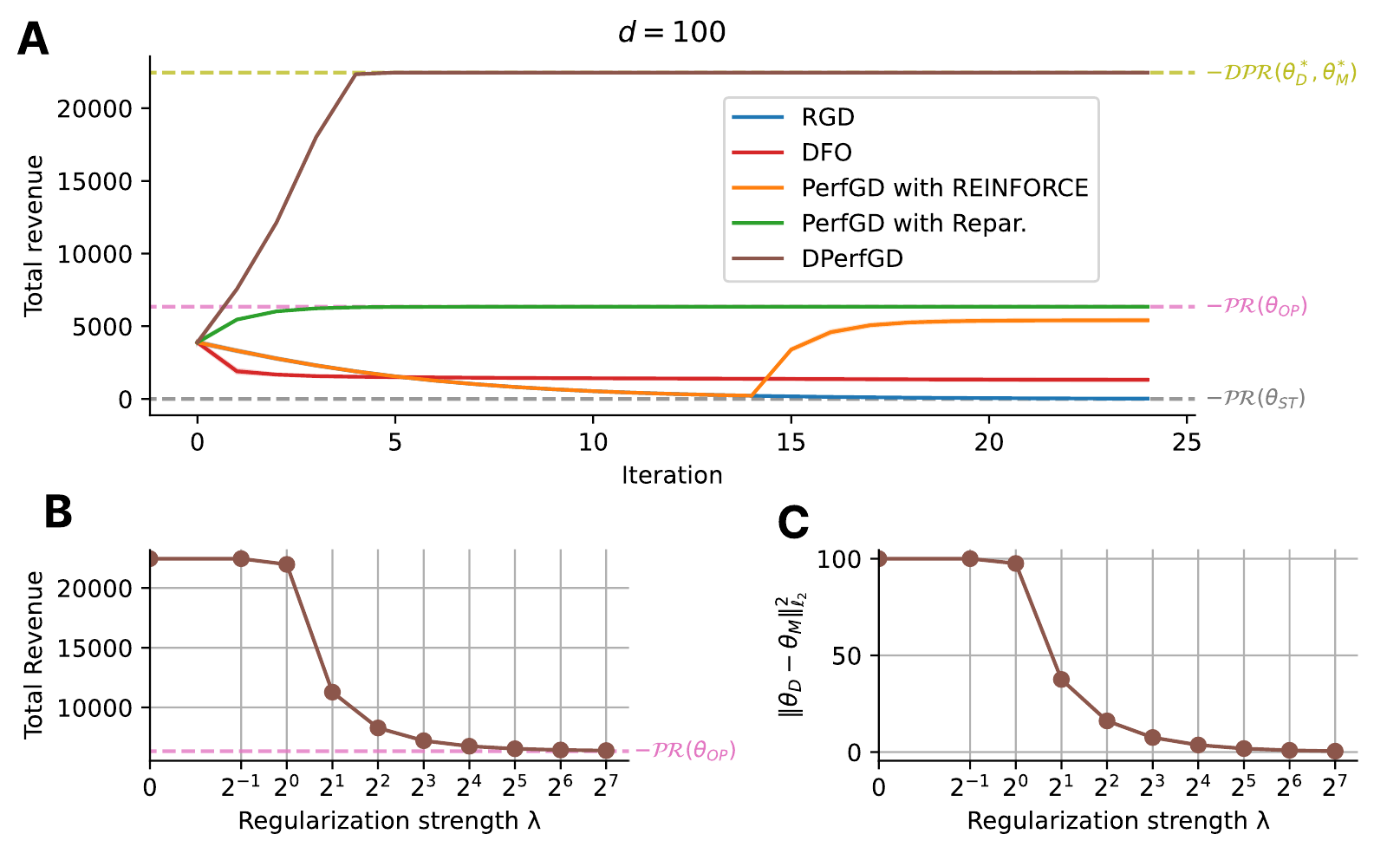}}
\caption{%
\textbf{A} Total revenue (higher is better) for $100$ products, i.e., $d=100$, on the \textbf{pricing} dataset. Optimizing for the \textit{decoupled} optimum retrieves more revenue than reaching the standard optimal point of PP. DPerfGD converges to the decoupled optimum. The rest of the algorithms converge to their corresponding solutions, as shown by \citet{izzo2021learn}. \textbf{B} Total revenue (value at the last iteration) of DPerfGD for several regularization strengths. \textbf{C} shows how the deception cost decreases increasing $\lambda$. Although this figure shows results for $d=100$, note that we chose the same regularization strengths as in Figure \ref{fig:algorithm-trajectory} (where $d=1$). Observe that \textbf{B} can be seen as a summary of \textbf{A} if we include in \textbf{A} the experiments for several regularization strengths. 
}
\label{fig:pricing-exp}
\end{center}
\end{figure}
\section{Experiments}
\label{sec:experiments}

In this section, we empirically evaluate settings in which an institution strategically deceives users by decoupling the model that governs its decisions from the model that shapes user behavior.\footnote{The code for the experiments is part of the performativeGYM library \cite{bautiste2026performativepredictionmade} and can be found here \url{https://github.com/wearepal/performativeGYM/}} We compare outcomes under such behavior, modeled through the DPP framework, with those obtained when the institution does not engage in deception, corresponding to standard PP. We report both institutional performance (the objective optimized by the institution) and the deception cost, which serves as a proxy for user-facing harm, across the different optimization approaches discussed, including \textsc{DPerfGD} and constrained optimization over the deception cost.

Figure~\ref{fig:pricing-exp} presents results on the synthetic pricing dataset introduced in Example~\ref{ex:pricing_luxury}, with $d=100$ products. We evaluate total revenue, measured as $-\mathcal{PR}(\theta)$ under standard PP and as $-\mathcal{DPR}(\theta_D,\theta_M)$ under DPP. This setting admits closed-form solutions for the stable point, the performative optimum, and the decoupled performative optimum, allowing us to directly compare algorithmic behavior against known ground-truth solutions and to isolate the effects of decoupling and deception. Consult Appendix~\ref{appendix:pricing-details} for the derivation.

We consider the following baselines. Repeated Gradient Descent (RGD) \cite{peformativeprediction2020perdomo}, which converges to a stable point. Two variants of Performative Gradient Descent (PerfGD) are also included: PerfGD with REINFORCE \cite{izzo2021learn}, which requires a retraining warm-up phase (hence its initial behavior closely resembles RGD), and PerfGD with the reparameterization trick \cite{cyffers2024optimal}, referred to as PerfGD (reparam.). Both variants converge to the performative optimum. Finally, we include a Derivative-Free Optimization (DFO) method \cite{flaxman2005online}. Although not specifically designed for performative prediction, it serves as a simple black-box baseline.

\begin{figure}[t]
\begin{center}
\centerline{\includegraphics[width=\columnwidth]{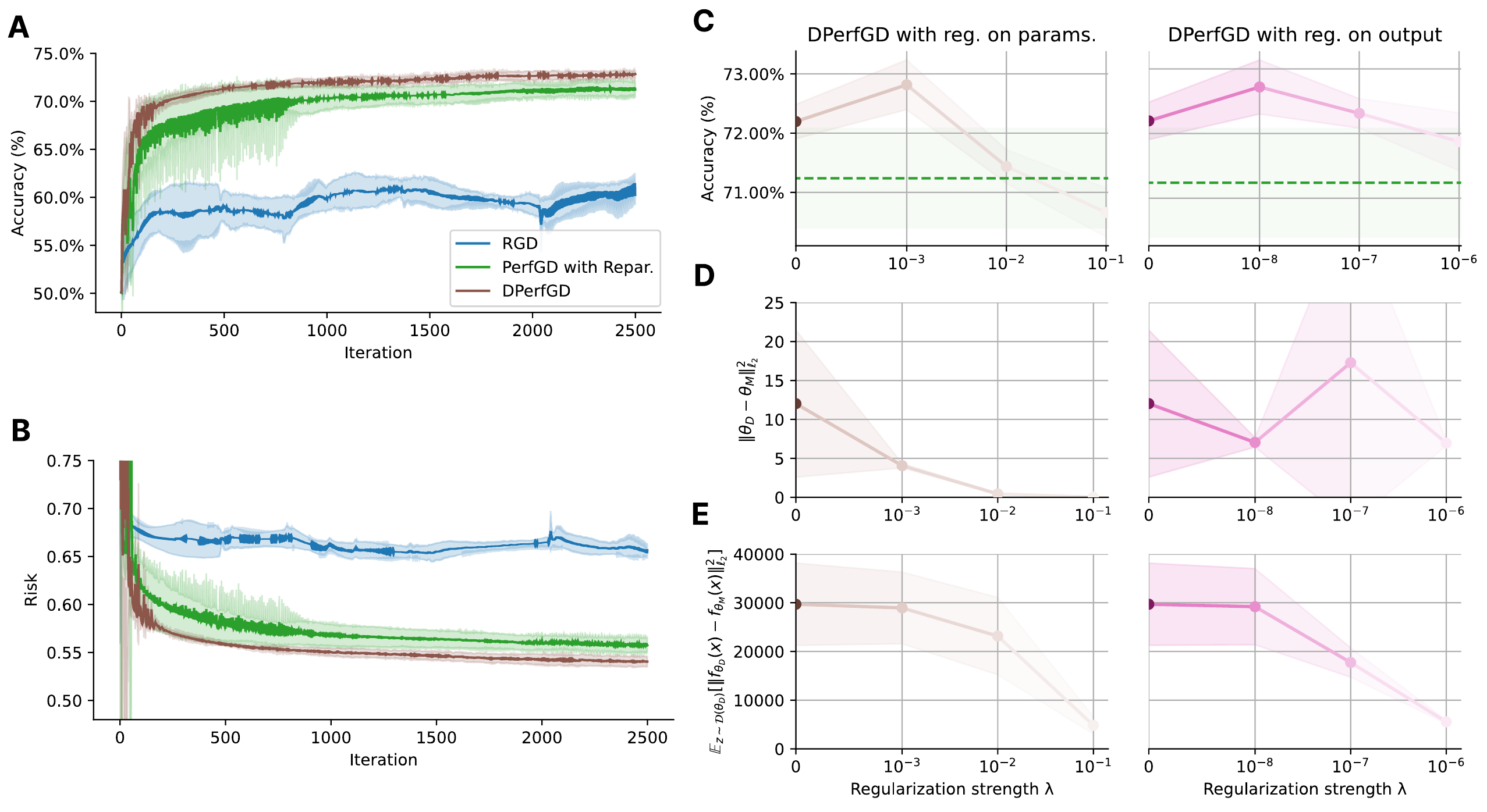}}%
\caption{%
\textbf{A} shows the accuracy on the \textit{Give Me Some Credit} dataset \cite{GiveMeSomeCredit} (money lending) with a strategic classification distribution map with a 2-layer NN (100 hidden neurons) for multiple algorithms.
\textbf{B} shows the loss and \textbf{C}, \textbf{D}, \textbf{E} show the tradeoff between regularization strength and multiple metrics. The left column shows the metrics when adding a $\|\theta_D-\theta_M\|^2_{\ell_2}$ penalty (reg. on params.) and the right column when adding a $\|f_{\theta_D}(x) - f_{\theta_M}(x)\|^2_{\ell_2}$ penalty (reg. on output). In this case, we opted to leave out the big $\lambda$s that lead to unstable training to facilitate the interpretation of the plots. 
PerfGD outperforms RGD, as PerfGD is able to reach the \textit{standard} performative optimum, whereas RGD can only find the stable point. Nevertheless, as expected, the accuracy is better when optimizing in the DPP framework. 
}
\label{fig:credit-exp}
\end{center}
\end{figure}

Figure~\ref{fig:credit-exp} reports results on the real-world \textit{Give Me Some Credit} dataset with strategic classification, as introduced in Example~\ref{ex:tricking-stragetic}. This tabular dataset contains financial features of loan applicants ($x \in \mathbb{R}^{11}$) and a binary outcome indicating whether a loan was repaid ($y=0$) or defaulted ($y=1$). While this dataset, equipped with a distribution map based on strategic classification \cite{HarMegPapWoo16}, has become a standard benchmark in performative prediction, we extend prior work by considering a two-layer multilayer perceptron as a model instead of just logistic regression. We use the binary cross-entropy loss to train the two-layer neural network.

Unlike the previous synthetic experiment, this dataset does not admit closed-form expressions for the stable or optimal performative points. Therefore, we report accuracy as the primary performance metric. Additionally, we do not apply the REINFORCE-based variant of performative gradient descent in this setting, as it requires explicit knowledge of the distribution map governing user responses, which is unavailable in real-world data.

Further experimental details for both setups are provided in Appendix~\ref{appendix:experimental-details}.

Figure \ref{fig:pricing-exp}\textbf{A} and Figures \ref{fig:credit-exp}\textbf{A}, \ref{fig:credit-exp}\textbf{B} show that DPerfGD consistently outperforms PerfGD. This behavior is expected in light of Theorem \ref{th:gap-pr-dpr}, which establishes that the risk of the decoupled optimum is always lower than or equal to the risk of the standard performative optimum, i.e.,
$\mathcal{DPR}(\theta_D^*,\theta_M^*) \le \mathcal{PR}(\theta_{OP})$. These results confirm that an institution controlling the model can achieve a strictly better outcome by adopting a deceptive strategy.

Figures \ref{fig:pricing-exp}\textbf{B} and \ref{fig:credit-exp}\textbf{C} illustrate that introducing a penalty term reduces institutional benefits when the strength is sufficiently large. In the latter, we can also see how it performs below the performative risk of the standard optimum when the regularization strength is sufficiently large. Note that this statement refers to the accuracy metric, which is a usage metric and does not necessarily vary monotonically with the loss. Therefore, institutions have an incentive to select small values of $\lambda$ to avoid such performance degradation. For values of $\lambda$ that still yield an acceptable performance, the associated deception cost remains very large (see Figure \ref{fig:pricing-exp}\textbf{C} and \ref{fig:credit-exp}\textbf{D},\textbf{E}). Consequently, institutions have little incentive to adopt large regularization strengths, leaving users largely unprotected. Note that the users do not know the \textit{true} deception cost as they do not have access to $\theta_M$. These observations highlight the need for additional explicit regulatory constraints to protect users from strategically deceptive behavior from the institutions that control model deployment.

\section{Conclusion}

In this paper, we introduce Decoupled Performative Prediction (DPP), a framework that relaxes the standard assumption that the model deployed by an institution is identical to the model to which the environment responds. This relaxation enables the study of realistic and practically relevant deployment settings, but it also exposes a critical vulnerability:
institutions can strategically shape behavior through misaligned or deceptive disclosures, with potentially severe consequences for affected individuals.
By explicitly modeling scenarios in which users lack transparency into decision-making systems, we formalize how informational asymmetries can generate systematic risks, misaligned incentives, and harmful outcomes.

To quantify these dynamics, we introduce the notion of deception cost, which 
characterizes the ease with which institutions can sustain adversarial disclosure strategies and the resulting burden imposed on users.
%
Our analysis complements the performative power framework of \citet{performativepower2022hard} by identifying a highly plausible and 
previously unmodeled mechanism through which institutions can exploit performative feedback, that is not merely to cope with distributional shifts, but to benefit from them.
This challenges the common assumption that performative effects are necessarily detrimental to institutional objectives.

More broadly, our results elevate disclosure from a matter of transparency to a foundational technical design decision that governs learning dynamics, risk, and welfare in performative systems.
These findings underscore an urgent need for regulatory oversight, auditing standards, and accountability mechanisms that treat disclosure as a central object of governance rather than a peripheral compliance requirement.
%

\paragraph{Limitations and future work.}
As in prior work on PP, our framework relies on specifying a distribution map that models how the environment responds to model deployment. In practice, these feedback mechanisms may involve complex behavioral or strategic dynamics and can be difficult to characterize precisely. Accordingly, our results should be interpreted as providing a principled framework for reasoning about the risks that may arise from interactions between users and the model. Future work could deepen our analysis of settings in which disclosed and deployed models differ due to strategic behavior, particularly by incorporating richer forms of institutional adaptation, heterogeneous user responses, and dynamic regulatory environments.

In addition, several practically relevant use cases of decoupled deployment remain for future work. For example, users may query the system repeatedly and may learn about its behavior over time.\footnote{\citet{ghalme2021strategic} study this setting in the context of strategic classification; extending such analyses to PP remains an interesting direction for future work.} Another interesting setting can be when the relationship between the disclosed and real model is constrained, extending our notion of deception cost to richer forms of limited or regulated disclosure. Finally, it would be interesting to analyze \emph{partial disclosure} regimes in which only a subset of model parameters or features is truthfully revealed. These extensions may help better understand how informational constraints shape performative dynamics in real-world systems.

Overall, we hope this work motivates future research on technical and legal safeguards that enable individuals, communities, and regulators to detect, contest, and protect against deceptive deployment practices.

\section*{Generative AI Usage Statement}

We have used ChatGPT 5.2 to support the styling (fluency and grammar) of the text during the preparation of the manuscript. 

\section*{Acknowledgements}

We thank the reviewers for their thoughtful and constructive feedback. This project has been directly supported by the Basque Government (BERC 2026–2029 and IT2109-26); by the Spanish Ministry of Science and Innovation \\(MICIN/AEI/10.13039/501100011033) through project PID2022-137442NB-I00 and the BCAM Severo Ochoa accreditation (PRE2022-102655 and CEX2021-001142-S); and by the European Union’s Horizon Europe Programme (Grant Agreement No. 101120763 – TANGO and No. 101123000 – Act.AI). The views and opinions expressed are those of the author(s) only and do not necessarily reflect those of the granting authorities. Neither the European Union nor the granting authorities can be held responsible for them.

\newpage
\bibliographystyle{unsrtnat}
\bibliography{sample-base}

@inproceedings{ghalme2021strategic,
  title={Strategic classification in the dark},
  author={Ghalme, Ganesh and Nair, Vineet and Eilat, Itay and Talgam-Cohen, Inbal and Rosenfeld, Nir},
  booktitle={International Conference on Machine Learning},
  pages={3672--3681},
  year={2021},
  organization={PMLR}
}

@article{williams1992simple,
  title={Simple statistical gradient-following algorithms for connectionist reinforcement learning},
  author={Williams, Ronald J},
  journal={Machine learning},
  volume={8},
  pages={229--256},
  year={1992},
  publisher={Springer}
}

@inproceedings{kingma2013auto,
  author       = {Diederik P. Kingma and
                  Max Welling},
  editor       = {Yoshua Bengio and
                  Yann LeCun},
  title        = {Auto-Encoding Variational Bayes},
  booktitle    = {International Conference on Learning Representations, {ICLR}},
  year         = {2014},
}

@inproceedings{
li2024stochastic,
title={Stochastic Optimization Schemes for Performative Prediction with Nonconvex Loss},
author={Qiang Li and Hoi To Wai},
booktitle={The Thirty-eighth Annual Conference on Neural Information Processing Systems},
year={2024},
url={https://openreview.net/forum?id=ejIzdt50ek}
}

@inproceedings{izzo2021learn,
  title={How to learn when data reacts to your model: performative gradient descent},
  author={Izzo, Zachary and Ying, Lexing and Zou, James},
  booktitle={International Conference on Machine Learning},
  pages={4641--4650},
  year={2021},
  organization={PMLR}
}

@inproceedings{miller2021outside,
  title={Outside the echo chamber: Optimizing the performative risk},
  author={Miller, John P and Perdomo, Juan C and Zrnic, Tijana},
  booktitle={International Conference on Machine Learning},
  pages={7710--7720},
  year={2021},
  organization={PMLR}
}

@inproceedings{
lin2024plugin,
title={Plug-in Performative Optimization},
author={Licong Lin and Tijana Zrnic},
booktitle={Forty-first International Conference on Machine Learning},
year={2024},
url={https://openreview.net/forum?id=jh7FDDwDBf}
}

@inproceedings{brown2022performative,
  title={Performative prediction in a stateful world},
  author={Brown, Gavin and Hod, Shlomi and Kalemaj, Iden},
  booktitle={International conference on artificial intelligence and statistics},
  pages={6045--6061},
  year={2022},
  organization={PMLR}
}

@inproceedings{mofakhami2023performative,
  title={Performative prediction with neural networks},
  author={Mofakhami, Mehrnaz and Mitliagkas, Ioannis and Gidel, Gauthier},
  booktitle={International Conference on Artificial Intelligence and Statistics},
  pages={11079--11093},
  year={2023},
  organization={PMLR}
}

@misc{GiveMeSomeCredit,
    author = {Credit Fusion and Will Cukierski},
    title = {Give Me Some Credit},
    year = {2011},
    howpublished = {\url{https://kaggle.com/competitions/GiveMeSomeCredit}},
    note = {Kaggle}
}

@inproceedings{flaxman2005online,
  title={Online convex optimization in the bandit setting: gradient descent without a gradient},
  author={Flaxman, Abraham D and Kalai, Adam Tauman and McMahan, H Brendan},
  booktitle={Proceedings of the sixteenth annual ACM-SIAM symposium on Discrete algorithms},
  pages={385--394},
  year={2005}
}

@inproceedings{kabra2024limitations,
  title={The Limitations of Model Retraining in the Face of Performativity},
  author={Kabra, Anmol and Patel, Kumar Kshitij},
  booktitle={Humans, Algorithmic Decision-Making and Society: Modeling Interactions and Impact},
  series    = {ICML 2024 Workshop},
  year={2024}
}

@book{mackenzie2020economists,
url = {https://doi.org/10.1515/9780691214665},
title={Do economists make markets?: {On} the performativity of economics},
editor = {Donald MacKenzie and Fabian Muniesa and Siu Leung-Sea},
publisher = {Princeton University Press},
address = {Princeton},
isbn = {9780691214665},
year = {2008}
}

@book{austin1975things,
  title={How to do things with words},
  author={Austin, John Langshaw},
  year={1975},
  publisher={Harvard university press}
}

@article{broersma2010journalism,
  title={Journalism as performative discourse},
  author={Broersma, Marcel},
  journal={Journalism and Meaning-Making Reading the Newspaper},
  pages={15--35},
  year={2010},
  publisher={Hampton Press New York}
}

@inproceedings{liu2018delayed,
  title={Delayed impact of fair machine learning},
  author={Liu, Lydia T and Dean, Sarah and Rolf, Esther and Simchowitz, Max and Hardt, Moritz},
  booktitle={International Conference on Machine Learning},
  pages={3150--3158},
  year={2018},
  organization={PMLR}
}

@inproceedings{ensign2018runaway,
  title={Runaway feedback loops in predictive policing},
  author={Ensign, Danielle and Friedler, Sorelle A and Neville, Scott and Scheidegger, Carlos and Venkatasubramanian, Suresh},
  booktitle={Conference on fairness, accountability and transparency},
  pages={160--171},
  year={2018},
  organization={PMLR}
}

@inproceedings{mandal2023performative,
  title={Performative reinforcement learning},
  author={Mandal, Debmalya and Triantafyllou, Stelios and Radanovic, Goran},
  booktitle={International Conference on Machine Learning},
  pages={23642--23680},
  year={2023},
  organization={PMLR}
}

@inproceedings{
cyffers2024optimal,
title={Optimal Classification under Performative Distribution Shift},
author={Edwige Cyffers and Muni Sreenivas Pydi and Jamal Atif and Olivier Capp{\'e}},
booktitle={The Thirty-eighth Annual Conference on Neural Information Processing Systems},
year={2024},
url={https://openreview.net/forum?id=3J5hvO5UaW}
}

@inproceedings{
xue2024distributionally,
title={Distributionally Robust Performative Prediction},
author={Songkai Xue and Yuekai Sun},
booktitle={The Thirty-eighth Annual Conference on Neural Information Processing Systems},
year={2024},
url={https://openreview.net/forum?id=E8wDxddIqU}
}

@String{Computer = "{IEEE} Computer" }

@String{Springer = "Springer-Verlag" }

@inproceedings{performativepower2022hard,
 author = {Hardt, Moritz and Jagadeesan, Meena and Mendler-D\"{u}nner, Celestine},
 booktitle = {Proceedings of the 36th Annual Conference on Neural Information Processing Systems (NeurIPS)},
 pages = {22969--22981},
 publisher = {Curran Associates, Inc.},
 title = {Performative Power},
 year = {2022}
}

@InProceedings{peformativeprediction2020perdomo,
  title = 	 {Performative Prediction},
  author =       {Perdomo, Juan and Zrnic, Tijana and Mendler-D{\"u}nner, Celestine and Hardt, Moritz},
  booktitle = 	 {Proceedings of the 37th International Conference on Machine Learning (ICML)},
  pages = 	 {7599--7609},
  year = 	 {2020},
  publisher =    {PMLR},
}

@inproceedings{mendler2020stochastic,
 author = {Mendler-D\"{u}nner, Celestine and Perdomo, Juan and Zrnic, Tijana and Hardt, Moritz},
 booktitle = {Proceedings of the 34th Annual Conference on Neural Information Processing Systems (NeurIPS)},
 pages = {4929--4939},
 publisher = {Curran Associates, Inc.},
 title = {Stochastic Optimization for Performative Prediction},
 year = {2020}
}

@article{drusvyatskiy2023stochastic,
  title={Stochastic optimization with decision-dependent distributions},
  author={Drusvyatskiy, Dmitriy and Xiao, Lin},
  journal={Mathematics of Operations Research},
  volume={48},
  number={2},
  pages={954--998},
  year={2023},
  publisher={INFORMS}
}

@inproceedings{HarMegPapWoo16,
author = {Hardt, Moritz and Megiddo, Nimrod and Papadimitriou, Christos and Wootters, Mary},
title = {Strategic Classification},
year = {2016},
publisher = {ACM},
booktitle = {Proceedings of the 2016 ACM Conference on Innovations in Theoretical Computer Science (ITCS)},
pages = {111–122},
}

@inproceedings{HarMazMenZrn23,
author = {Hardt, Moritz and Mazumdar, Eric and Mendler-D\"{u}nner, Celestine and Zrnic, Tijana},
title = {Algorithmic collective action in machine learning},
year = {2023},
publisher = {PMLR},
booktitle = {Proceedings of the 40th International Conference on Machine Learning (ICML)},
pages= {12570-12586}
}

@inproceedings{
GauBacJor25,
title={Statistical Collusion by Collectives on Learning Platforms},
author={Etienne Gauthier and Francis Bach and Michael I. Jordan},
booktitle={Proceedings of the 42nd International Conference on Machine Learning (ICML)},
year={2025},
pages={18897-18919},
publisher={PMLR}
}

@inproceedings{KarVinKarSun25,
author = {Karan, Aditya and Vincent, Nicholas and Karahalios, Karrie and Sundaram, Hari},
title = {Algorithmic Collective Action with Two Collectives},
year = {2025},
publisher = {ACM},
booktitle = {Proceedings of the 8th ACM Conference on Fairness, Accountability, and Transparency (FaccT)},
pages = {1468–1483},
}

@inproceedings{bautiste2026performativepredictionmade,
  author = {Sanguino Bautiste, Javier and Kehrenberg, Thomas and Rosety, Carlos and Lozano, Jose A. and Quadrianto, Novi},
  title = {Performative Prediction made practical},
  abstract = {Performative Prediction studies settings where deploying a model induces a distribution shift in the data with the aim of building robust and good-performing models under these post-deployment effects. Most existing work in this area is theoretical and relies on strict assumptions to converge to those models, making the resulting techniques difficult to apply in practice and limiting their accessibility to the broader Machine Learning (ML) community. In this blog post, we use visualization techniques 1) to provide an intuitive explanation of Performative Prediction and 2) to extract practical insights for studying convergence when theoretical assumptions do not hold.},
  booktitle = {ICLR Blogposts 2026},
  year = {2026},
  date = {April 27, 2026},
  url  = {https://iclr-blogposts.github.io/2026/blog/2026/performative-prediction/}
}

@article{wang2026wasserstein,
  title={Wasserstein Distributionally Robust Performative Prediction},
  author={Wang, Siyi and Wang, Zifan and Johansson, Karl H},
  journal={arXiv preprint arXiv:2602.06730},
  year={2026}
}


\clearpage
\appendix

\newpage

\section{From traditional Machine Learning to extended Performative Prediction}%
\label{appendix:illustration}
To illustrate the differences between classical ML, PP and DPP, we include Fig.~\ref{fig:teaser}.

\begin{figure}[t]
\begin{center}
\centerline{\includegraphics[width=\columnwidth]{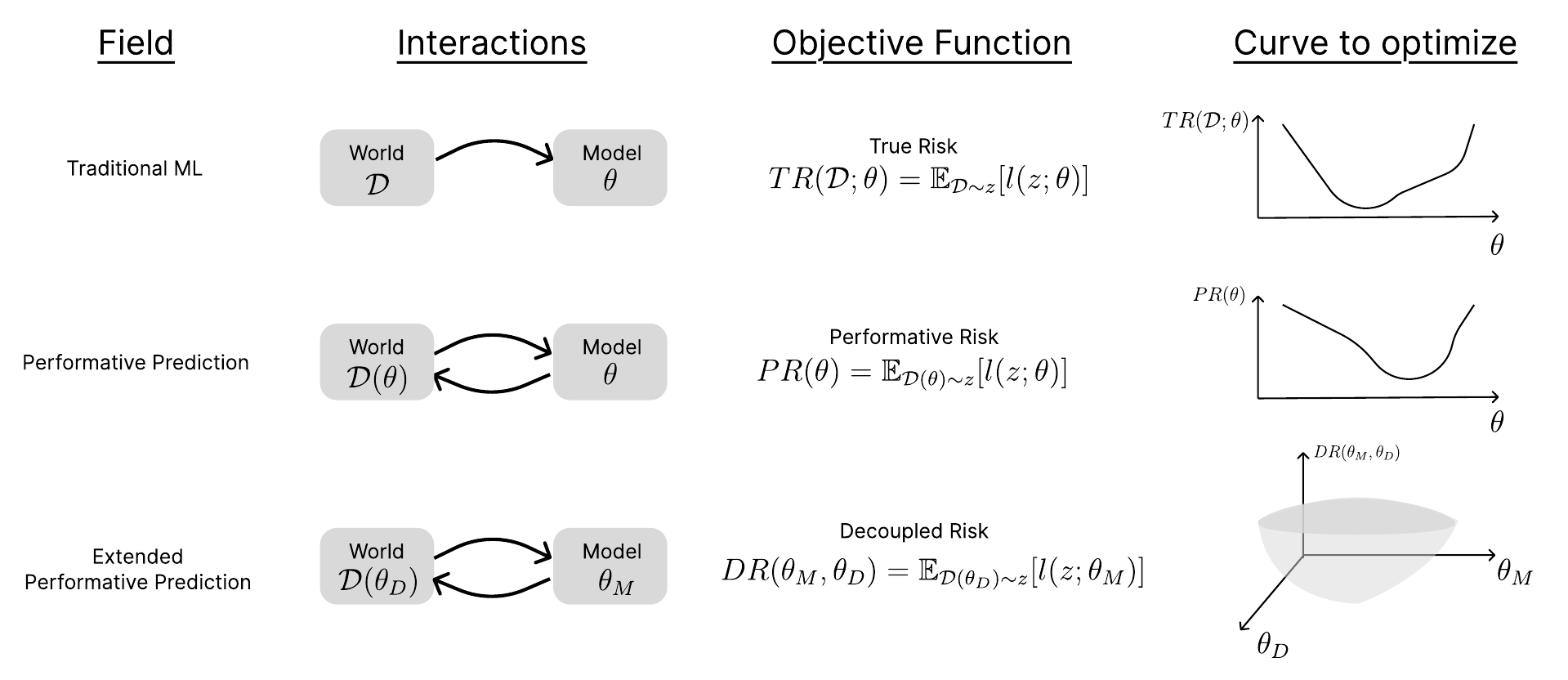}}
\caption{PP's formulation extends the classical risk by incorporating the dependency of the data distribution on the model parameters. Extended PP decouples the parameters. Note that \textit{standard} PP is \textit{extended} PP for $\theta_M = \theta_D = \theta$}
\label{fig:teaser}
\end{center}\end{figure}

\section{Details of the PerfGD variants}%
\label{appendix:perfgd-variants}

\citet{izzo2021learn} proposed PerfGD using the REINFORCE method to calculate the performative gradient. Nevertheless, \citet{cyffers2024optimal} realised that it can be calculated with the reparametrization method. This section has the mathematical details of both methods.

\subsection{Performative Gradient Descent (PerfGD) with REINFORCE}\label{appendix:perfgd-reinforce}
Uses the fact that the gradient of the likelihood of a random variable is the same likelihood times the gradient of the log likelihood $\nabla_{\theta} p_{\theta}(z)= p_{\theta}(z)\nabla_{\theta}\log p_{\theta}(z)$. 
\begin{align}
\label{eq:reinforce}
 \nabla_\theta PR(\theta) &= \nabla_\theta \int \ell(z;\theta) p_{\mathcal{D}(\theta)}(z) dz \nonumber \\
&= \int \frac{\partial \ell(z;\theta)}{\partial \theta}   p_{\mathcal{D}(\theta)}(z)dz + \int  \ell(z;\theta)  \frac{\partial p_{\mathcal{D}(\theta)}(z)}{\partial \theta}dz\nonumber\\
& = \int \frac{\partial \ell(z;\theta)}{\partial \theta}   p_{\mathcal{D}(\theta)}(z)dz + \int  \ell(z;\theta)   \frac{\partial \log p_{\mathcal{D}(\theta)}(z)}{\partial \theta} p_{\mathcal{D}(\theta)}(z)dz \nonumber \\
&=\mathbb{E}_{z \sim \mathcal{D}(\theta)} \left[\frac{\partial \ell(z; \theta)}{\partial \theta} \right] + \mathbb{E}_{z \sim \mathcal{D}(\theta)} \left[ \ell(z; \theta)\frac{\partial \log p_{\mathcal{D}(\theta)}(z)}{\partial \theta} \right] \nonumber\\
&=\mathbb{E}_{z \sim \mathcal{D}(\theta)} \left[\frac{\partial \ell(z; \theta)}{\partial \theta} + \ell(z; \theta)\frac{\partial \log p_{\mathcal{D}(\theta)}(z)}{\partial \theta} \right].
\end{align}

\subsection{PerfGD with reparametrization}\label{appendix:perfgd-rp}
Uses a deterministic function that is dependent in a base distribution and encodes the transformation caused by the parameter. Therefore, the expectation depends on the base-distribution only. In the case of PP, this is achievable by defining a base distribution \(\mathcal{D}(\theta)\) that captures the samples before performativity and a push-forward model that defines the transformation of each sample due to performativity $z = \varphi(z_o, \theta)$. We can then use the multivariate chain rule to calculate the performative gradient:

\begin{align}
\label{eq:reparam-trick}
    \nabla_\theta PR(\theta) &= \nabla_\theta \mathbb{E}_{z \sim \mathcal{D}(\theta)} \big[ \ell(z; \theta)\big] \nonumber \\
    & = \nabla_\theta \mathbb{E}_{z_o \sim \mathcal{D}_o} \big[ \ell(\varphi(z_o, \theta); \theta)\big] \nonumber \\
    &=  \mathbb{E}_{z_o \sim \mathcal{D}_o} \Big[ \nabla_\theta \ell(\varphi(z_o, \theta); \theta)\Big] \nonumber \\
    & = \mathbb{E}_{z_0 \sim \mathcal{D}_0} \left[ \left.\frac{\partial \ell(z; \theta)}{\partial \theta}\right|_{z=\varphi(z_0; \theta)}  + \left.\frac{\partial \ell(z; \theta)}{\partial z}\right|_{z=\varphi(z_0; \theta)} \frac{\partial \varphi(z_0;\theta)}{\partial \theta} \right].
\end{align}

\section{Theorems' proofs}
In this appendix, we simplify notation by writing $\|\cdot\|$ for the $\ell_2$ norm.

\subsection{Bounds proof}
\label{annex:proof-bounds}

Let \(\theta_M^{\mathrm{BR}}(\theta_D)\) denote the best-response (BR) model that is deployed when the population reacts to \(\theta_D\), i.e.,
\begin{equation}
    \theta_M^{\mathrm{BR}}(\theta_D)
    \in
    \argmin_{\theta_M} \mathcal{DPR}(\theta_D,\theta_M)
    =
    \argmin_{\theta_M}
    \mathbb{E}_{z\sim\mathcal{D}(\theta_D)}[\ell(z;\theta_M)].
\end{equation}

\begin{theorem}[Lower bound]
Assume that \(\ell(z;\theta)\) is \(\mu\)-strongly convex in \(\theta\) for all \(z\). Then
\begin{equation}
    \min_{\theta} \mathcal{PR}(\theta)
    -
    \min_{\theta_D,\theta_M} \mathcal{DPR}(\theta_D,\theta_M)
    \ge
    \frac{\mu}{2}
    \|\theta_{OP} - \theta_M^{\mathrm{BR}}(\theta_{OP})\|^2,
\end{equation}
where \(\theta_{OP} \in \argmin_\theta \mathcal{PR}(\theta)\).
\end{theorem}

\begin{proof}

Since \(\ell(z;\theta)\) is \(\mu\)-strongly convex in \(\theta\) for all \(z\), its expectation taken over $z\sim\mathcal{D}(\theta_D)$
\[
 \mathcal{DPR}(\theta_D,\theta_M) = \mathbb{E}_{z \sim \mathcal{D}(\theta_{D})}[\ell(z;\theta_M)]
\]
is also \(\mu\)-strongly convex in \(\theta_M\).

Since $\theta^{BR}_M$ is a minimizer of $\mathcal{DPR}$ with respect of $\theta_M$, strong convexity gives the inequality.

\[
\mathcal{PR}(\theta_{OP}) = \mathcal{DPR}(\theta_{OP},\theta_{OP})
\ge
\mathcal{DPR}(\theta_{OP},\theta_M^{\mathrm{BR}}(\theta_{OP}))
+
\frac{\mu}{2}
\|\theta_{OP} - \theta_M^{\mathrm{BR}}(\theta_{OP})\|^2.
\]

Since
\[
\mathcal{DPR}(\theta_{OP},\theta_M^{\mathrm{BR}}(\theta_{OP}))
\ge
\min_{\theta_D,\theta_M}
\mathcal{DPR}(\theta_D,\theta_M),
\]
we conclude that
\[
\mathcal{PR}(\theta_{OP})
-
\min_{\theta_D,\theta_M}
\mathcal{DPR}(\theta_D,\theta_M)
\ge
\frac{\mu}{2}
\|\theta_{OP} - \theta_M^{\mathrm{BR}}(\theta_{OP})\|^2.
\]
\end{proof}

\begin{theorem}[Upper bound]
Assume that \(\ell(z;\theta)\) is \(L\)-Lipschitz in \(\theta\) for all \(z\). Then
\begin{equation}
    \min_{\theta}\mathcal{PR}(\theta) - \min_{\theta_D,\theta_M}\mathcal{DPR}(\theta_D,\theta_M)
    \le
    L \|\theta_D^* - \theta_M^*\|,
\end{equation}
where
\((\theta_D^*,\theta_M^*)
\in
\argmin_{\theta_D,\theta_M}\mathcal{DPR}(\theta_D,\theta_M)\)
and
\(\theta_{OP} \in \argmin_{\theta}\mathcal{PR}(\theta)\).
\end{theorem}

\begin{proof}
Since \(\ell(z;\theta)\) is \(L\)-Lipschitz in \(\theta\), we have for all \(z\),
\[
|\ell(z;\theta_D^*) - \ell(z;\theta_M^*)|
\le
L \|\theta_D^* - \theta_M^*\|.
\]
Taking expectation with respect to \(z \sim \mathcal{D}(\theta_D^*)\),
\[
\mathbb{E}_{z\sim\mathcal{D}(\theta_D^*)}
\big[\big|
\ell(z;\theta_D^*) - \ell(z;\theta_M^*)
\big|\big]
\le
L \|\theta_D^* - \theta_M^*\|.
\]
By Jensen's inequality (\(|\mathbb{E}[X]| \le \mathbb{E}[|X|]\)),
\begin{align*}
L \|\theta_D^* - \theta_M^*\|
&\ge
\left|
\mathbb{E}_{z\sim\mathcal{D}(\theta_D^*)}
[\ell(z;\theta_D^*) - \ell(z;\theta_M^*)]
\right| \\
&=
\left|
\mathcal{PR}(\theta_D^*)
-
\mathcal{DPR}(\theta_D^*,\theta_M^*)
\right|.
\end{align*}
Then, because $ \mathcal{PR}(\theta_D^*) \ge \mathcal{DPR}(\theta_D^*,\theta_M^*), $
the difference is nonnegative and we obtain
\[
L \|\theta_D^* - \theta_M^*\|
\ge
\mathcal{PR}(\theta_D^*)
-
\mathcal{DPR}(\theta_D^*,\theta_M^*).
\]
Finally, since \(\theta_{OP}\) minimizes \(\mathcal{PR}\),
\[
\mathcal{PR}(\theta_D^*)
\ge
\mathcal{PR}(\theta_{OP}).
\]
we have
\[
\min_{\theta}\mathcal{PR}(\theta) - \min_{\theta_D,\theta_M}\mathcal{DPR}(\theta_D,\theta_M)
\le
L \|\theta_D^* - \theta_M^*\|.
\]
\end{proof}

\subsection{Convexity of the decoupled performative risk}

Recall the following definitions: 

\begin{definition}[Jointly convex]
    A function $f: \mathbb{R}^d\times\mathbb{R}^d\to\mathbb{R}$ is \emph{jointly convex} if
    \begin{equation}
        f(\lambda x + (1-\lambda)x';\lambda y + (1-\lambda)y') \leq \lambda f(x;y) + (1-\lambda) f(x';y')~.
    \end{equation}
\end{definition}

\begin{definition}[Affinity]\label{def:affinity1}
    A function $f:\mathbb{R}^n\to\mathbb{R}^m$ is \emph{affine} if
    \begin{equation}
        f(\lambda_1x_1 + \ldots + \lambda_nx_n) = \lambda_1 f(x_1) + \ldots + \lambda_n f(x_n)
    \end{equation}
    where $\lambda_1 + \ldots + \lambda_n=1$.
\end{definition}

{
\setcounter{theorem}{8}
\renewcommand{\thetheorem}{\ref{th:convexity-dpr}}
\begin{theorem}
If $\ell(z;\theta_M)$ is jointly convex and $\mathcal{D}(\theta_D)$ follows the push-forward model where $\varphi(z_o;\theta_D)$ is affine in $\theta_D$, then $\mathcal{DPR}(\theta_D,\theta_M)$ is jointly convex.
\end{theorem}
}
\begin{proof}
Recall that
\begin{equation}
\mathcal{DPR}(\theta_D, \theta_M):=\mathbb{E}_{z\sim\mathcal{D}(\theta_0)}\big[\ell(z;\theta_M)\big]~.
\end{equation}
We wish to prove that
\begin{equation}
\mathcal{DPR}(\lambda\theta_D+(1-\lambda)\theta_D', \lambda\theta_M+(1-\lambda)\theta_M')\leq \lambda\mathcal{DPR}(\theta_D, \theta_M)+(1-\lambda)\mathcal{DPR}(\theta_D', \theta_M').
\end{equation}
Our starting point is
\begin{equation}
\mathcal{DPR}(\lambda\theta_D+(1-\lambda)\theta_D', \lambda\theta_M+(1-\lambda)\theta_M') = \mathbb{E}_{z\sim\mathcal{D}(\lambda \theta_D + (1-\lambda)\theta'_D)} \big[\ell(z;\lambda\theta_M+(1-\lambda)\theta_M')\big].
\end{equation}
In order to simplify the notation a bit, we introduce the following shorthand: $\tilde{\theta}_M:=\lambda\theta_M+(1-\lambda)\theta_M'$. The previous expression then becomes
\begin{equation}
\mathbb{E}_{z\sim\mathcal{D}(\lambda \theta_D + (1-\lambda)\theta'_D)} \big[\ell(z;\tilde{\theta}_M)\big].
\end{equation}
As the first step, we use the fact that $\mathcal{D}(\cdot)$ follows the push-forward model. We can rewrite the expectation as:
\begin{equation}
\mathbb{E}_{z\sim\mathcal{D}(\lambda \theta_D + (1-\lambda)\theta'_D)} \big[\ell(z;\tilde{\theta}_M)\big]=\mathbb{E}_{z_0\sim\mathcal{D}_0} \Big[\ell\big(\varphi(z_0; \lambda \theta_D + (1-\lambda)\theta'_D);\tilde{\theta}_M\big)\Big].
\end{equation}
Next, we use the fact that $\varphi$ is affine in $\theta_D$:
\begin{equation}
= \mathbb{E}_{z_0\sim\mathcal{D}_0} \Big[\ell\big(\lambda\varphi(z_0; \theta_D) + (1-\lambda)\varphi(z_0;\theta'_D);\tilde{\theta}_M\big)\Big].
\end{equation}
Now, notice that $\varphi(z_0;\theta_D)$ corresponds to $z\sim\mathcal{D}(\theta_D)$ and $\varphi(z_0;\theta_D')$ corresponds to $z'\sim\mathcal{D}(\theta_D')$. This means we can rewrite the expression like this
\begin{equation}
=\mathbb{E}_{z\sim\mathcal{D}(\theta_D),z'\sim\mathcal{D}(\theta_D')} \big[\ell(\lambda z + (1-\lambda)z';\tilde{\theta}_M)\big].
\end{equation}

We are now ready to replace $\tilde{\theta}_M$ by its original definition again. We show here the expression that we started with and what we have derived so far:
\begin{multline}
\mathbb{E}_{z\sim\mathcal{D}(\lambda \theta_D + (1-\lambda)\theta'_D)} \big[\ell(z;\lambda\theta_M+(1-\lambda)\theta_M')\big]\\
=\mathbb{E}_{z\sim\mathcal{D}(\theta_D),z'\sim\mathcal{D}(\theta_D')} \big[\ell(\lambda z + (1-\lambda)z';\lambda\theta_M+(1-\lambda)\theta_M')\big].
\end{multline}
As the final step, we use the fact that $\ell$ is jointly convex:
\begin{multline}
\mathbb{E}_{z\sim\mathcal{D}(\lambda \theta_D + (1-\lambda)\theta'_D)} \big[\ell(z;\lambda\theta_M+(1-\lambda)\theta_M')\big] \\
\leq \mathbb{E}_{z\sim\mathcal{D}(\theta_D), z'\sim\mathcal{D}(\theta'_D)} \big[\lambda \ell(z;\theta_M) + (1-\lambda) \ell(z';\theta'_M)\big].
\end{multline}
Notice now that if we split up the expectation on the right-hand side and move out the constant factors, we get this:
\begin{multline}
\mathbb{E}_{z\sim\mathcal{D}(\lambda \theta_D + (1-\lambda)\theta'_D)} \big[\ell(z;\lambda\theta_M+(1-\lambda)\theta_M')\big] \\
\leq\lambda \mathbb{E}_{z\sim\mathcal{D}(\theta_D)} \big[ \ell(z;\theta_M)\big] + (1-\lambda)\mathbb{E}_{ z'\sim\mathcal{D}(\theta'_D)} \big[ \ell(z';\theta'_M)\big]
\end{multline}
which is what we wanted to show.
\end{proof}

We provide another proof of the convexity of $\mathcal{DPR}(\theta_D,\theta_M)$ with a modification of the affinity condition. For this, we need to modify slightly the assumptions. 

\begin{definition}[$\gamma$-strongly jointly convex]
    A function $f: \mathbb{R}^d\times\mathbb{R}^d\to\mathbb{R}$ is $\gamma$-\emph{strongly jointly convex} if
    \begin{equation}
        f(x',y')-f(x,y)\geq \frac{\partial f(x,y)}{\partial x}(x'-x)+\frac{\partial f(x,y)}{\partial y}(y'-y) + \frac{\gamma}{2}\left(\|x'-x\|^2+\|y'-y\|^2\right)~.
    \end{equation}
\end{definition}

\begin{definition}[Affinity, alternative]
    A function $f:\mathbb{R}^n\to\mathbb{R}^m$ is \emph{affine} if
    \begin{equation}
        f(x')-f(x) = \mathbf{J}_f(x) (x'-x)
    \end{equation}
    where $\mathbf{J}_f$ is the Jacobian matrix of $f$, $\mathbf{J}_f=\left(\frac{\partial f}{\partial x_1}\cdots\frac{\partial f}{\partial x_n}\right)$. This definition is equivalent to definition~\ref{def:affinity1}.
\end{definition}

\begin{theorem}
If $\ell(z;\theta_M)$ is $\gamma$-strongly jointly convex and $\mathcal{D}(\theta_D)$ follows the push-forward model where $\varphi(z_o;\theta_D)$ is affine in $\theta_D$, such that its derivative in $\theta_D$ is constant in $\theta_D$ for a given $z_o$,  i.e., $\frac{\partial\varphi(z_o; \theta_D)}{\partial\theta_D}=g(z_o)$ for some $g(\cdot)$, then $\mathcal{DPR}(\theta_D,\theta_M)$ is $(\lambda\gamma)$-strongly jointly convex, where $\lambda=\min(1, \mathbb{E}_{z_o\sim\mathcal{D}_o}\left[\left\|g(z_o)\right\|^2\right])$ where $g(\cdot)$ is the partial derivative of $\varphi(z_o;\theta_D)$.
\end{theorem}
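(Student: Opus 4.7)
The plan is to adapt the proof of the previous (non-strong) joint-convexity theorem but keep track of the quadratic slack that strong convexity provides. First I would invoke $\gamma$-strong joint convexity of $\ell$ at the two points $(\varphi(z_o;\theta_D), \theta_M)$ and $(\varphi(z_o;\theta_D'), \theta_M')$ for a fixed $z_o\sim\mathcal{D}_0$. This produces the standard first-order inequality
$$\ell(\varphi(z_o;\theta_D');\theta_M') - \ell(\varphi(z_o;\theta_D);\theta_M) \geq \nabla_z\ell^{\top}\big(\varphi(z_o;\theta_D') - \varphi(z_o;\theta_D)\big) + \nabla_{\theta_M}\ell^{\top}(\theta_M' - \theta_M) + \tfrac{\gamma}{2}\big(\|\varphi(z_o;\theta_D') - \varphi(z_o;\theta_D)\|^2 + \|\theta_M' - \theta_M\|^2\big),$$
where the gradients of $\ell$ are evaluated at $(\varphi(z_o;\theta_D), \theta_M)$.

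Next I would use the two structural hypotheses on $\varphi$ in sequence. Because $\varphi(z_o;\cdot)$ is affine with constant Jacobian $g(z_o)$, the displacement $\varphi(z_o;\theta_D') - \varphi(z_o;\theta_D)$ collapses to $g(z_o)(\theta_D' - \theta_D)$, and the chain rule identifies $\nabla_z\ell^{\top} g(z_o)$ with the partial derivative $\nabla_{\theta_D}[\ell(\varphi(z_o;\theta_D);\theta_M)]$. Taking expectation over $z_o\sim\mathcal{D}_0$ and exchanging expectation with differentiation under standard regularity converts the left-hand side into $\mathit{DR}(\theta_M',\theta_D') - \mathit{DR}(\theta_M,\theta_D)$ and the linear part into the desired $\nabla_{\theta_M}\mathit{DR}^{\top}(\theta_M' - \theta_M) + \nabla_{\theta_D}\mathit{DR}^{\top}(\theta_D' - \theta_D)$.

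The main obstacle, and the step that produces the constant $\lambda=\min(1,\mathbb{E}[\|g(z_o)\|^2])$, is to lower-bound the expected quadratic slack by a uniform multiple of $\|\theta_D'-\theta_D\|^2 + \|\theta_M'-\theta_M\|^2$. For scalar $\theta_D$ one has $\|g(z_o)(\theta_D' - \theta_D)\|^2 = \|g(z_o)\|^2(\theta_D' - \theta_D)^2$, so taking expectation gives $\mathbb{E}_{z_o}[\|g(z_o)\|^2]\,\|\theta_D'-\theta_D\|^2 \geq \lambda\|\theta_D'-\theta_D\|^2$, since $\lambda\leq\mathbb{E}[\|g(z_o)\|^2]$ by construction. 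Simultaneously, $\|\theta_M'-\theta_M\|^2 \geq \lambda\|\theta_M'-\theta_M\|^2$ because $\lambda\leq 1$; the role of the $\min$ with $1$ is precisely to enforce a single constant across both directions so that the $\theta_M$-slack is not lost. Combining the two bounds yields a residual of $\tfrac{\lambda\gamma}{2}\big(\|\theta_M'-\theta_M\|^2 + \|\theta_D'-\theta_D\|^2\big)$, which matches the defining inequality of $(\lambda\gamma)$-strong joint convexity for $\mathit{DR}$. I expect the only delicate point in writing the full proof to be the interchange of expectation and differentiation, which follows from the same push-forward/regularity assumptions already used implicitly in the non-strong version.
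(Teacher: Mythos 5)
Your proposal follows essentially the same route as the paper's proof: apply $\gamma$-strong joint convexity of $\ell$ pointwise under the push-forward representation, use affinity to write $\varphi(z_o;\theta_D')-\varphi(z_o;\theta_D)=g(z_o)(\theta_D'-\theta_D)$ and the chain rule to recover $\nabla_{\theta_D}\mathit{DR}$, then absorb the quadratic slack via $\lambda=\min\bigl(1,\mathbb{E}_{z_o\sim\mathcal{D}_o}[\|g(z_o)\|^2]\bigr)$. The only differences are cosmetic (you apply convexity per sample before taking expectations, the paper manipulates terms inside the expectation), and you are in fact slightly more careful than the paper in flagging that the identity $\|g(z_o)(\theta_D'-\theta_D)\|^2=\|g(z_o)\|^2\,\|\theta_D'-\theta_D\|^2$ holds as stated only for scalar $\theta_D$, a step the paper's proof uses without comment.
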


\begin{proof}
    We need to show that
    \begin{multline}
        \mathcal{DPR}(\theta_D',\theta_M') - \mathcal{DPR}(\theta_D,\theta_M) \geq \nabla_{\theta_M} \mathcal{DPR}(\theta_D,\theta_M)(\theta'_M - \theta_M)\\
        + \nabla_{\theta_D} \mathcal{DPR}(\theta_D,\theta_M)(\theta'_D - \theta_D)
        +\frac{\gamma\lambda}{2} (\|\theta'_M - \theta_M\|^2 + \|\theta'_D - \theta_D\|^2).
    \end{multline}
    We begin with $\mathcal{DPR}(\theta_D',\theta_M') - \mathcal{DPR}(\theta_D,\theta_M)$ and rewrite it in terms of the expectation:
    \begin{equation}
        \mathcal{DPR}(\theta_D',\theta_M') - \mathcal{DPR}(\theta_D,\theta_M) =
        \mathbb{E}_{z'\sim\mathcal{D}(\theta_D'), z\sim\mathcal{D}(\theta_D)}\big[\ell(z';\theta_M')-\ell(z;\theta_M)\big]~.
    \end{equation}
    Next, we use the fact that $\ell$ is $\gamma$-strongly jointly convex:
    \begin{multline}
        \mathcal{DPR}(\theta_D',\theta_M') - \mathcal{DPR}(\theta_D,\theta_M)\\
        \geq
        \mathbb{E}_{z'\sim\mathcal{D}(\theta_D'), z\sim\mathcal{D}(\theta_D)}\Big[\nabla_{\theta_M}\ell(z;\theta_M)(\theta_M'-\theta_M) + \nabla_z\ell(z;\theta_M)(z'-z)\\
        +\frac{\gamma}{2}\left(\|\theta_M'-\theta_M\|^2+\|z'-z\|^2\right)\Big]~.
    \end{multline}
    The second term within the expectation can be turned back into $\mathcal{DPR}(\theta_D;\theta_M)$ and terms independent of $z,z'$ can be moved out:
    \begin{multline}
        =\nabla_{\theta_M}\mathcal{DPR}(\theta_D,\theta_M)(\theta_M'-\theta_M)+\frac\gamma2\|\theta_M'-\theta_M\|^2\\
        +\mathbb{E}_{z'\sim\mathcal{D}(\theta_D'), z\sim\mathcal{D}(\theta_D)}\Big[\nabla_z\ell(z;\theta_M)(z'-z)+\frac\gamma2\|z'-z\|^2\Big]~.
    \end{multline}
    We now rewrite the expectation to be over the base distribution, $\mathcal{D}_o$:
    \begin{multline}
        =\nabla_{\theta_M}\mathcal{DPR}(\theta_D,\theta_M)(\theta_M'-\theta_M)+\frac\gamma2\|\theta_M'-\theta_M\|^2\\
        +\mathbb{E}_{z_o\sim\mathcal{D}_o}\bigg[\left.\frac{\partial\ell(z;\theta_M)}{\partial z}\right|_{z=\varphi(z_o;\theta_D)}\big(\varphi(z_o;\theta_D')-\varphi(z_o;\theta_D)\big)\\
        +\frac\gamma2\big\|\varphi(z_o;\theta_D')-\varphi(z_o;\theta_D)\big\|^2\bigg]~.
    \end{multline}
    The gradient of $\ell$ was rewritten for clarity.
    We can now use the fact that $\varphi(z_o;\theta_D)$ is affine in $\theta_D$:
    \begin{multline}
        =\nabla_{\theta_M}\mathcal{DPR}(\theta_D,\theta_M)(\theta_M'-\theta_M)+\frac\gamma2\|\theta_M'-\theta_M\|^2\\
        +\mathbb{E}_{z_o\sim\mathcal{D}_o}\Bigg[\left.\frac{\partial\ell(z;\theta_M)}{\partial z}\right|_{z=\varphi(z_o;\theta_D)}\frac{\partial\varphi(z_o;\theta_D)}{\partial\theta_D}(\theta_D'-\theta_D)\\
        +\frac\gamma2\left\|\frac{\partial\varphi(z_o;\theta_D)}{\partial\theta_D}\right\|^2\|\theta_D'-\theta_D\|^2\Bigg]~.
    \end{multline}
    We can see now that the first term in the expectation corresponds to the chain rule of derivatives. In the second term, we replace the derivative of $\varphi$ with the function $g(z_o)$ which we can do because the derivative of an affine function is constant:
    \begin{multline}
        =\nabla_{\theta_M}\mathcal{DPR}(\theta_D,\theta_M)(\theta_M'-\theta_M)+\frac\gamma2\|\theta_M'-\theta_M\|^2\\
        +\mathbb{E}_{z_o\sim\mathcal{D}_o}\Bigg[\nabla_{\theta_D}\Big(\ell\big(\varphi(z_o;\theta_D);\theta_M\big)\Big)(\theta_D'-\theta_D)
        +\frac\gamma2\left\|g(z_o)\right\|^2\|\theta_D'-\theta_D\|^2\Bigg]~.
    \end{multline}
    We can recognize another $\mathcal{DPR}(\theta_M;\theta_D)$ term in there and we move everything out of the expectation that is independent of it:
    \begin{multline}
        =\nabla_{\theta_M}\mathcal{DPR}(\theta_D,\theta_M)(\theta_M'-\theta_M)+\frac\gamma2\|\theta_M'-\theta_M\|^2\\
        +\nabla_{\theta_D}\mathcal{DPR}(\theta_D,\theta_M)(\theta_D'-\theta_D)
        +\frac\gamma2\mathbb{E}_{z_o\sim\mathcal{D}_o}\left[\left\|g(z_o)\right\|^2\right]\|\theta_D'-\theta_D\|^2~.
    \end{multline}
    Now, per the assumption, $\lambda$ is $\leq 1$ and $\leq \mathbb{E}_{z_o\sim\mathcal{D}_o}\left[\left\|g(z_o)\right\|^2\right]$, so, we can introduce $\lambda$ as follows:
    \begin{multline}
        \mathcal{DPR}(\theta_D',\theta_M') - \mathcal{DPR}(\theta_D,\theta_M)\\
        \geq\nabla_{\theta_M}\mathcal{DPR}(\theta_D,\theta_M)(\theta_M'-\theta_M)+\frac{\lambda\gamma}2\|\theta_M'-\theta_M\|^2
        +\nabla_{\theta_D}\mathcal{DPR}(\theta_D,\theta_M)(\theta_D'-\theta_D)
        +\frac{\lambda\gamma}2\|\theta_D'-\theta_D\|^2,
    \end{multline}
    which is what we wanted to show.
\end{proof}

\subsection{Effect of deception cost on risk gap}\label{appendix:deception-cost-risk}
\begin{theorem}[Gap between performative and decoupled optima with deception cost]
Assume that \(\mathcal{L}(\hat y, y)\) is \(L\)-Lipschitz in \(\hat y\) for all \(y\).
Then, under the setting of perceived deception cost with $\mathbb{E}_{(x,y)\sim D(\theta_D)}[|f_{\theta_D}(x)-f_{\theta_M}(x)|] \le c'$, the gap in risk between the performative optimum and the decoupled optimum is bounded by
\begin{equation}
    \mathcal{PR}(\theta_{OP})
    -
    \mathcal{DPR}(\theta_D^*,\theta_M^*)
    \;\le\;c'L.
\end{equation}
\end{theorem}
\begin{proof}
Since \(\mathcal{L}(\hat y, y)\) is \(L\)-Lipschitz in \(\hat y\), we have for all \(y\),
\begin{equation}
|\mathcal{L}(\hat{y}_1, y) - \mathcal{L}(\hat{y}_2, y)|
\le
L \|\hat{y}_1 - \hat{y}_2\|.
\end{equation}
Taking expectation with respect to \((x,y) \sim \mathcal{D}(\theta_D^*)\)
and setting $\hat{y}_1=f_{\theta_D^*}(x)$ and $\hat{y}_2=f_{\theta_M^*}(x)$,
where $(\theta_D^*, \theta_M^*)$ is the minimum of Eq~\eqref{eq:optimization-problem-perceived-deception}, we get
\begin{equation}
\mathbb{E}_{(x,y)\sim\mathcal{D}(\theta_D^*)}
\big[\big|
\mathcal{L}(f_{\theta_D^*}(x), y) - \mathcal{L}(f_{\theta_M^*}(x), y)
\big|\big]
\le
L |f_{\theta_D^*}(x) - f_{\theta_M^*}(x)|.
\label{eq:lipschitz-proof-appendix}
\end{equation}
Combining Eq~\eqref{eq:lipschitz-proof-appendix} with the constraint from Eq~\eqref{eq:optimization-problem-perceived-deception}, we get
    \begin{align}
Lc' &\geq\underset{(x, y)\sim \mathcal{D}(\theta_D^*)}{\mathbb{E}}
\Bigl[L\bigl|f_{\theta_D^*}(x)-f_{\theta_M^*}(x)\bigr|\Bigr]\nonumber\\
&\geq \underset{(x, y)\sim \mathcal{D}(\theta_D)}{\mathbb{E}}\Bigl[\big|\mathcal{L}\big(f_{\theta_D^*}(x), y\big) -\mathcal{L}\big(f_{\theta_M^*}(x), y\big)\big|\Bigr].
\end{align}

By Jensen's inequality (\(|\mathbb{E}[X]| \le \mathbb{E}[|X|]\)),
\begin{align*}
L c'
&\ge
\left|
\underset{(x, y)\sim \mathcal{D}(\theta_D^*)}{\mathbb{E}}\Bigl[\mathcal{L}\big(f_{\theta_D^*}(x), y\big) -\mathcal{L}\big(f_{\theta_M^*}(x), y\big)\Bigr]
\right|\\
&=
\left|
\mathcal{PR}(\theta_D^*)
-
\mathcal{DPR}(\theta_D^*,\theta_M^*)
\right|.
\end{align*}
Then, because $ \mathcal{PR}(\theta_D^*) \ge \mathcal{DPR}(\theta_D^*,\theta_M^*), $
the difference is nonnegative and we obtain
\[
L c'
\ge
\mathcal{PR}(\theta_D^*)
-
\mathcal{DPR}(\theta_D^*,\theta_M^*).
\]
Finally, since \(\theta_{OP}\) minimizes \(\mathcal{PR}\), then
\[
\mathcal{PR}(\theta_D^*)
\ge
\mathcal{PR}(\theta_{OP}),
\]
and we have
\[
\mathcal{PR}(\theta_{OP}) - \mathcal{DPR}(\theta_D^*,\theta_M^*)
\le
L c'.
\]
\end{proof}

\subsection{Lipschitzness of cross-entropy with logits}\label{appendix:ce-lipschitz}
\begin{proposition}
For $x \in \mathbb{R}^C$ and $y \in \{0,1,\dots,C-1\}$, define
\[
\ell(x,y) = -x_y + \log\!\left(\sum_{k=0}^{C-1} e^{x_k}\right).
\]
Then, for every fixed $y$, the map $x \mapsto \ell(x,y)$ is globally $\sqrt{2}$-Lipschitz with respect to the Euclidean norm.
\end{proposition}

\begin{proof}
Let
\[
p = \operatorname{softmax}(x), \qquad
p_j = \frac{e^{x_j}}{\sum_{k=0}^{C-1} e^{x_k}}
\quad (j=0,\dots,C-1),
\]
and let $e_y \in \mathbb{R}^C$ denote the $y$-th standard basis vector, i.e., the one-hot vector where only the $y$-th entry is set to 1. Then differentiating
\[
\ell(x,y) = -x_y + \log\!\left(\sum_{k=0}^{C-1} e^{x_k}\right),
\]
with respect to $x$ gives
\[
\nabla_x \ell(x,y) = - e_y + p.
\]

Hence
\[
\|\nabla_x \ell(x,y)\|_2^2
= \|p - e_y\|_2^2
= \|p\|_2^2 + \|e_y\|_2^2 - 2\langle p, e_y\rangle
= \|p\|_2^2 + 1 - 2p_y.
\]
Now $p$ is a probability vector, so $0\leq p_j \leq 1$ for all $j$ and $\sum_j p_j = 1$. Therefore
\[
\|p\|_2^2 = \sum_{j=0}^{C-1} p_j^2 \le \sum_{j=0}^{C-1} p_j = 1.
\]
It follows that
\begin{equation}
\|\nabla_x \ell(x,y)\|_2^2
\le 1 + 1 - 2p_y
\le 2.
\end{equation}
Thus
\begin{equation}
\|\nabla_x \ell(x,y)\|_2 \le \sqrt{2}
\qquad \text{for all } x \in \mathbb{R}^C.
\end{equation}
Since a differentiable function whose gradient norm is bounded by $L$ is $L$-Lipschitz, we conclude that
\[
|\ell(x,y) - \ell(x',y)|
\le \sqrt{2}\,\|x-x'\|_2
\qquad \text{for all } x,x' \in \mathbb{R}^C.
\]
Therefore $\ell(\cdot,y)$ is globally $\sqrt{2}$-Lipschitz.
\end{proof}

\section{Details of the Example~\ref{ex:biased-coin} and~\ref{ex:biased-coin-dpr}}
\label{appendix:details-example}
\subsection{Performative Risk case (Example~\ref{ex:biased-coin}).}

The same details can be consulted in \cite{peformativeprediction2020perdomo}. We repeat them in this manuscript, so it is easier to compare them with the decoupled risk case, which we introduce in this work.

We want to learn a model $f_{\theta}(x) = \theta x + \frac{1}{2}$ with the squared error as the loss $\ell(z,\theta)=(z-f_{\theta}(x))^2$. The distribution map is modeled though the conditional probability of $Y|X \sim \text{Bernuilli}(\frac{1}{2} + \mu X+ \varepsilon \theta X)$ with support in $\{\pm 1\}$. We consider $\varepsilon>0$, $\theta\in[0,1]$ and $\mu\in(0,\frac{1}{2})$.

Let's first expand the performative risk by using the law of total expectation. 

\begin{equation*}
    \mathcal{PR}(\theta) = \mathbb{E}_{(x,y)\sim \mathcal{D}(\theta)}[(Y-f_\theta(X))^2] = \mathbb{E}_{X}\mathbb{E}[(Y-f_\theta(X))^2 | X].
\end{equation*}

Expanding the square, we have

\begin{align*}
     \mathbb{E}[(Y-f_\theta(X))^2 | X] &= \mathbb{E}[Y^2 + 2Yf_\theta(X) + f_\theta(X)^2|X] \\ &=\mathbb{E}[Y^2|X] + 2 \mathbb{E}[Y|X] f_{\theta}(X) + f_{\theta}(X)^2.
\end{align*}

Adding and subtracting the term $\mathbb{E}^2[Y|X]$, we can write

\begin{align}
\label{eq:expansion-pr-example}
     \mathbb{E}[(Y-f_\theta(X))^2 | X] &=\mathbb{E}[Y^2|X] + 2 \mathbb{E}[Y|X] f_{\theta}(X) + f_{\theta}(X)^2 + \mathbb{E}^2[Y|X] - \mathbb{E}^2[Y|X] \nonumber\\ &= \text{Var}(Y|X) + (\mathbb{E}[Y|X] - f_{\theta}(X))^2.
\end{align}

Finally, plugging the variance and mean of the Bernuilli distribution, we get

\begin{equation}
\label{eq:perf-risk-example}
    PR(\theta) = \mathbb{E}_X[\frac{1}{4}-2\mu\theta X^2 + (1-2\varepsilon)\theta^2X^2] = \frac{1}{4}-2\mu\theta + (1-2\varepsilon)\theta^2.
\end{equation}

Note that because $X$ is supported in $\{\pm1\}$, $X^2=1$.

\paragraph{Stable point}

Recall that the definition of the stable point considers the distribution to be fixed (and given by that same stable point).

\begin{equation*}
    \theta_{ST} = \argmin_{\theta \in \Theta} \mathbb{E}_{z\sim\mathcal{D}(\theta_{ST})}[\ell(z;\theta)].
\end{equation*}

We need to set the gradient of the expression to zero, which we want to find the minimum. 

\begin{align*}
\nabla_{\theta} \mathbb{E}_{z\sim\mathcal{D}(\theta_{ST})}[\ell(z;\theta)] &= \mathbb{E}_{z\sim\mathcal{D}(\theta_{ST})}[\nabla_{\theta}\ell(z;\theta)] \\ &= \mathbb{E}_{z\sim\mathcal{D}(\theta_{ST})}[\nabla_{\theta}(Y-f_{\theta}(X))^2]  \\ &= \mathbb{E}_{z\sim\mathcal{D}(\theta_{ST})}[2(Y-f_{\theta}(X))X] = 0.
\end{align*}

We are looking for $\theta_{ST}$ and from the law of total expectation, we know that

\begin{align*}
\mathbb{E}_{z\sim\mathcal{D}(\theta_{ST})}[2(Y-f_{\theta_{ST}}(X))X] &= \mathbb{E}_X\mathbb{E}[2(Y-f_{\theta_{ST}}(X))X|X] \\ &= \mathbb{E}_X\mathbb{E}[2XY - 2Xf_{\theta_{ST}}(X)|X] \\ &= \mathbb{E}_X[2\mu X^2 + 2\varepsilon\theta_{ST} X^2 - 2\theta_{ST}X^2] \\ &= 2(\mu + (\varepsilon -1)\theta_{ST}) = 0.
\end{align*}

Again, we used that $X^2 = 1$. Solving the equation, we find

\begin{equation*}
    \theta_{ST} = \frac{\mu}{1-\varepsilon},
\end{equation*}

and the corresponding risk is

\begin{equation*}
    \mathcal{PR}(\theta_{ST}) = \frac{1}{4} - \frac{\mu^2}{(1-\varepsilon)^2}.
\end{equation*}

\paragraph{Optimal point}

To calculate the optimal point, we need to calculate the gradient of the $\mathcal{PR}(\theta)$ with respect to $\theta$ and set it equal to zero. From equation \ref{eq:perf-risk-example}, we have:

\begin{equation*}
    \nabla \mathcal{PR(\theta)} = \nabla (\frac{1}{4}-2\mu\theta + (1-2\varepsilon)\theta^2) = 2 ( (1-2\varepsilon) \theta -\mu).
\end{equation*}

Then, it is easy to see that
\begin{equation*}
    \nabla \mathcal{PR(\theta)}\big|_{\theta=\theta_{OP}} = 0 \iff \theta_{OP} = \frac{\mu}{1-2\varepsilon}.
\end{equation*}

As $\theta\in[0,1]$, for $\theta_{OP}$ to exist, we require $\frac{\mu}{1-2\varepsilon} < 1$. The associated risk of the optimal point is then

\begin{equation*}
    \mathcal{PR}(\theta) = \frac{1}{4} - \frac{\mu^2}{1-2\varepsilon}.
\end{equation*}

\subsection{Decoupled Risk case (Example~\ref{ex:biased-coin-dpr}).} 

Now, we have $f_{\theta_D}(x) = \theta_D x + \frac{1}{2}$ and $Y|X \sim \text{Bernuilli}(\frac{1}{2} + \mu X+ \varepsilon \theta_D X)$ with support also in $\{\pm 1\}$. Plugging the corresponding decoupled terms in eq. \ref{eq:expansion-pr-example}, in this case, we get

\begin{equation*}
    \mathcal{DPR}(\theta_D,\theta_M) = \frac{1}{4} + \theta_M^2 - 2\varepsilon\theta_D\theta_M - 2 \mu\theta_M.
\end{equation*}

We have found that $\mathcal{DPR}(\theta_D, \theta_M)$ is monotonically decreasing in $\theta_D$ because $\varepsilon>0$. Therefore, the minimum would be reached in the upper bound of $\theta_D \in [0,1]$, i.e $\theta_D=1$. Thus, the problem gets simplified to

\begin{equation*}
    \argmin_{\theta_M\in\Theta} \mathcal{DPR}(1, \theta_M) = \frac{1}{4} + \theta_M^2 - 2\varepsilon\theta_M - 2\mu\theta_M \iff \theta_M = \mu + \varepsilon.
\end{equation*}

Therefore, the decoupled optimum is $(\theta_D^*,\theta_M^*) = (1,\mu+\varepsilon)$ and its associated risk

\begin{equation*}
    \mathcal{DPR}(\theta_D^*,\theta_M^*) = \frac{1}{4} - (\mu+\varepsilon)^2.
\end{equation*}

\section{Experimental details}
\label{appendix:experimental-details}

We repeat all experiments for 5 runs and plot the mean of the metrics. We add the standard deviation as a shape around the mean to account for the variance of the runs. 

\subsection{Pricing}
\label{appendix:pricing-details}
We initialize $\theta_0 \sim\mathcal{N}(6 \cdot \mathbf{1}; 1)$, $z_0 \sim\mathcal{N}(9 \cdot \mathbf{1}; 1)$, $\theta\sim\mathcal{N}(\mathbf{0},I_d)$ with $\mathbf{1} = (1,1,\ldots,1) \in \mathbb{R}^d$. We set the performative effect $\varepsilon=1.5$. We sample $n=1000$ inputs. The learning rate is fixed at $\eta=0.1$. We consider $\theta\in[-5,5]$

This setting has a closed-form solution for the stable point $\theta_{ST} = \frac{\mu_0}{\varepsilon}$ and for the optimal point $\theta_{OP} = \frac{\mu_0}{2\varepsilon}$. 

\paragraph{Calculating the stable point.}
Following the definition of the stable point, 

\begin{align*}
    \theta_{ST} = \argmin_{\theta\in\Theta}\mathbb{E}_{z\sim\mathcal{D}(\theta_{ST})}[\ell(z;\theta)].
\end{align*}

The distribution is considered to be fixed, to calculate the minimum, we need to calculate the gradient and set it equal to zero.

\begin{align*}
    \nabla_{\theta} \mathbb{E}_{z\sim\mathcal{D}(\theta_{ST})}[\ell(z;\theta)] &= \mathbb{E}_{z\sim\mathcal{D}(\theta_{ST})}[\nabla_{\theta}\ell(z;\theta)]\\& = \mathbb{E}_{z\sim\mathcal{D}(\theta_{ST})}[-z] \\&= - (z_0 - \varepsilon\theta_ST) = 0.
\end{align*}

Solving the equation, we have 

\begin{equation*}
    \theta_{ST} = \frac{z_0}{\varepsilon}.
\end{equation*}

\paragraph{Calculating the optimal point.}
The performative risk is

\begin{align*}
    \mathcal{PR}(\theta) &= \mathbb{E}_{z\sim\mathcal{N}(z_0 - \varepsilon\theta, I_d)}[-(\theta_0 + \theta)^Tz] \\
    &=-(\theta_0 + \theta)^T(z_0 - \varepsilon\theta) = \varepsilon\theta^2 + (\varepsilon\theta_0-z_0)\theta - z_0.
\end{align*}

The minimum of this value, i.e., the optimal point, is

\begin{equation}
    \theta_{OP} = \frac{z_0-\varepsilon\theta_0}{2\varepsilon}.
\end{equation}

\subsection{Give me some credit}

Following \citet{peformativeprediction2020perdomo}, we balance the dataset and normalize each column of the \textit{GiveMeSomeCredit} dataset to have zero mean and unit standard deviation. However, we do not set only some features to be the "performative" ones, i.e., all columns of $x_o$ can be changed when calculating $x=x_o + \varepsilon \nabla_x f_\theta(x)$.

We set $\varepsilon=10$ to get a significant performative effect. If $\varepsilon$ is small, $x_o\approx x$ and the performative effect disappears. We use $n=120{,}000$ samples and a learning rate of $\mu=0.1$. We use a 2-layer Multilayer Perceptron with a $100$ neurons as a hidden layer and a \texttt{ReLU} as a nonlinearity between layers. We initialize each weight with a normal distribution $\theta \sim \mathcal{N}(0,1)$. 

This experiment does not have a closed-form solution. 

\section{Decoupled stable points}
\label{appendix:decoupled-stable}
A decoupled stable point is the minimum of a distribution induced by any other model $\theta_D$

\begin{definition}[Decoupled stable point] A decoupled stable point is defined as
\begin{equation}
    \theta_M^{*} = \argmin_{\theta_M\in\Theta} \mathcal{DPR}(\theta_D,\theta_M).
\end{equation}
\end{definition}

Therefore, it follows

\begin{proposition}
\label{th:001}
If $\theta_{ST}$ is a stable point of the performative risk, then $\nabla_{\theta_M} \mathcal{DPR} (\theta_{ST},\theta_{ST})= 0$ 
\end{proposition}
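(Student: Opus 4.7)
The plan is to observe that Proposition~\ref{th:001} is essentially a restatement of Proposition~\ref{th:stable} (equivalently Proposition~\ref{th:stable-appendix}) in the terminology of decoupled stable points, and to re-derive it in one line from Definition~\ref{def:stable_point}. First I would unfold that definition: $\theta_{ST}$ is a performatively stable point iff $\theta_{ST} = \argmin_{\theta \in \Theta} \mathbb{E}_{z \sim \mathcal{D}(\theta_{ST})}[\ell(z, \theta)]$. Since the right-hand expectation coincides by construction with $DR(\theta,\theta_{ST})$, the univariate map $\theta_M \mapsto DR(\theta_M, \theta_{ST})$ — obtained by pinning the data argument to $\theta_D = \theta_{ST}$ — attains its minimum over $\Theta$ at $\theta_M = \theta_{ST}$. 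In the appendix's language, this simply says that $\theta_{ST}$ is a decoupled stable point for the particular choice $\theta_D = \theta_{ST}$.

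From here I would conclude by the standard first-order necessary condition for interior minima of differentiable functions: at an interior minimizer of a differentiable map, the partial derivative with respect to the free argument must vanish. Applied to $\theta_M \mapsto DR(\theta_M, \theta_{ST})$ at $\theta_M = \theta_{ST}$, this yields
\begin{equation*}
\nabla_{\theta_M} DR(\theta_M, \theta_D)\big|_{\theta_M = \theta_D = \theta_{ST}} = 0,
\end{equation*}
which is the stated conclusion. The exchange of gradient and expectation needed to view this partial derivative as $\mathbb{E}_{z\sim\mathcal{D}(\theta_{ST})}[\partial_{\theta_M}\ell(z;\theta_M)|_{\theta_M=\theta_{ST}}]$ is routine under the smoothness assumptions on $\ell(z;\cdot)$ used throughout the paper (cf.\ Eq.~\ref{eq:annex-model-derivative}).

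There is no serious obstacle here; the result is a near-immediate consequence of definitions. The only subtleties worth flagging are the implicit regularity hypotheses — interiority of $\theta_{ST}$ in $\Theta$, and differentiability of $\ell$ in its model argument — which are tacit throughout the paper and ensure the first-order condition applies. The proof structure should therefore mirror the two-line argument used for Proposition~\ref{th:stable-appendix} almost verbatim.
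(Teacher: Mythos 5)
Your proposal is correct and follows essentially the same route as the paper: the paper likewise identifies $\theta_{ST}$ as the minimizer of $\theta_M \mapsto \mathit{DR}(\theta_M,\theta_{ST})$ (phrased as a fixed point of the best-response map $\theta_M^*(\theta_D)$) and concludes that the partial derivative in $\theta_M$ vanishes there. Your explicit flagging of interiority and differentiability is a minor refinement of the paper's argument, not a different approach.
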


With the correct assumptions, there is one stable point for each $\theta_D$.
Note that a \textit{standard} stable point can be formulated as a point that lies in the $\theta_M = \theta_D$ plane where the partial derivative of \(\mathcal{DPR}\) w.r.t.\ \(\theta_M\) vanishes.

\subsection{Proof of proposition \ref{th:001}}
Let us fix a particular \(\theta_D\) and thus a particular data distribution \(\mathcal{D}(\theta_D)\).
Let \(\theta_M^*\) be the optimal model parameter vector on that distribution.
We can conceptualize it as a function of a \(\theta_D\) and express it in terms of the decoupled risk:
\begin{align*}
    \theta^{*}_M(\theta_D) := \,&\argmin_{\theta_M \in \Theta} \mathcal{DPR}(\theta_D, \theta_{M}) \\
=\, &\argmin_{\theta_M \in \Theta} \mathbb{E}_{z \sim \mathcal{D}(\theta_{D})}  \big[\ell(z, \theta_M)\big].
\end{align*}%
\begin{definition}[Stable point]
The stable point is defined as the \emph{fixed point} of the above function:
\begin{align*}
    \theta_{ST} = \theta_M^*(\theta_{ST}) ~.
\end{align*}
\end{definition}
It follows that
\begin{align*}
    \left.\frac{\partial \mathcal{DPR}(\theta_{D}, \theta_{M})}{\partial \theta_M}\right|_{\theta_M=\theta_{ST}, \theta_D=\theta_{ST}}  = 0.
\end{align*}
because once we have both $\theta_D$ and $\theta_M$ set to $\theta_{ST}$,
we cannot make $\mathcal{DPR}$ smaller by varying $\theta_M$.

\end{document}